\documentclass[preprint,12pt,authoryear]{elsarticle}
\usepackage[margin=1in]{geometry}
\usepackage{hyperref}
\usepackage{amsmath, amssymb, amsthm, enumitem}
\usepackage{amsmath}
\usepackage{algorithm}
\usepackage[table,xcdraw]{xcolor}
\usepackage[normalem]{ulem}
\useunder{\uline}{\ul}{}
\usepackage{graphicx}
\usepackage{subfig}
\usepackage{booktabs}
\usepackage{multirow}
\usepackage{algpseudocode}

\usepackage{amssymb}
\usepackage{amsmath}

\journal{elsarticle}

\begin{document}

\begin{frontmatter}



\title{Learning from Yesterday's Error: An Efficient Online Learning Method for Traffic Demand Prediction} 
\begin{highlights}
\item Propose an online learning method using yesterday's errors to adjust future forecasts.
\item Incorporate exponential smoothing  and mixture of experts for robust adjustments.
\item Introduce adaptive spatiotemporal smoothing to capture correlations across regions and time slots.
\item Achieves high accuracy with low computational cost in experiments across seven datasets.
\end{highlights}

\author[a]{Xiannan Huang}
\author[b]{Quan Yuan}
\author[a,b]{Chao Yang \corref{cor1}}
\cortext[cor1]{Corresponding author (e-mail: tongjiyc@tongji.edu.cn)}
\affiliation[a]{organization={Key Laboratory of Road and Traffic Engineering, Ministry of Education at Tongji University},
            addressline={4800 Cao’an Road}, 
            city={Shanghai},
            postcode={201804}, 
            state={Shanghai},
            country={China}}
\affiliation[b]            {organization={Urban Mobility Institute, Tongji University},
            addressline={1239 Siping Road}, 
            city={Shanghai},
            postcode={200092}, 
            state={Shanghai},
            country={China}}

\begin{abstract}
Accurately predicting short-term traffic demand is critical for intelligent transportation systems. While deep learning models achieve strong performance under stationary conditions, their accuracy often degrades significantly when faced with distribution shifts caused by external events or evolving urban dynamics. Frequent model retraining to adapt to such changes incurs prohibitive computational costs, especially for large-scale or foundation models.
To address this challenge, we propose \textbf{FORESEE} (\textbf{F}orecasting \textbf{O}nline with \textbf{Re}sidual \textbf{S}moothing and \textbf{E}nsemble \textbf{E}xperts), a lightweight online adaptation framework that is \textbf{accurate}, \textbf{robust}, and \textbf{computationally efficient}. FORESEE operates without any parameter updates to the base model. Instead, it corrects today’s forecast in each region using yesterday’s prediction error, stabilized through exponential smoothing guided by a mixture-of-experts mechanism that adapts to recent error dynamics. Moreover, an adaptive spatiotemporal smoothing component is proposed to propagates error signals across neighboring regions and time slots, capturing coherent shifts in demand patterns.
Extensive experiments on seven real-world datasets with three backbone models demonstrate that FORESEE consistently improves prediction accuracy, maintains robustness even when distribution shifts are minimal (avoiding performance degradation), and achieves the lowest computational overhead among existing online methods. By enabling real-time adaptation of traffic forecasting models with negligible computational cost, FORESEE paves the way for deploying reliable, up-to-date prediction systems in dynamic urban environments. We release our code and datasets at \href{https://github.com/xiannanhuang/FORESEE/}{GitHub}. 

\end{abstract}


\begin{keyword}
Traffic demand prediction\sep  Online prediction\sep Dynamically self-adaptive
\end{keyword}
\end{frontmatter}
\section{Introduction}
Accurately predicting the short-term (e.g., 30 minutes to several hours) demand for bicycles, taxis, and other transportation modes across various urban regions is crucial for transportation system. This importance stems from its role in efficiently allocating transport resources, developing intelligent transportation systems, promoting sustainable transport and reducing congestion \citep{liang2025foundation,huang_leveraging_2024,QIU2024104552,bai2019stg2seq}. Consequently, substantial research efforts in recent years have been dedicated to building precise short-term transport demand prediction models \citep{zou2024mt,wei2024multi,yan2024prostformer}.

The prevalent methodology for building a traffic demand prediction model involves utilizing historical data as the training set, proposing a specific model architecture and then training the model on this dataset. After that, the model will be used to predict traffic demand in the future. This approach implicitly assumes that the underlying traffic patterns remain consistent over time, meaning that the relationship learned from the training set is applicable to the test period. However, real-world traffic patterns are constantly changing. Numerous factors contribute to these shifts, such as external events (e.g., pandemics, economic shifts, weather changes) and alterations in land usage (e.g., opening a new shopping center or transit hub). Consequently, the patterns learned from the training data may not be fully suitable for future data \citep{wang2025evaluating}. A simple solution to address these changing patterns is retraining the model frequently, perhaps even daily or hourly. However, this approach could lead to substantial computational burden, especially for large foundation models recently proposed for traffic forecasting \citep{li2024opencity}.  

This challenge raises a key question: how can models be adapted to new traffic patterns with limited computational resources? This problem is commonly known as online prediction problem.

Although online prediction research for traffic forecasting problem is relatively limited \citep{guo2025online}, traffic forecasting can be viewed as a time series prediction task. In the broader field of time series prediction, there exists some research specifically focused on online prediction. The most fundamental online prediction approach is Online Gradient Descent (OGD). Specifically, its workflow involves updating model parameters when a prediction is made and the corresponding true value is observed. The prediction error is used to compute gradients of the model parameters via backpropagation, which then guide the parameter updates. Subsequent research about online prediction for time series tasks primarily focuses on improving this basic OGD method. Key directions for improvement include identifying the most critical parameters to update \citep{guo2025online} (since updating fewer parameters can enhance algorithm stability), and employing more sophisticated update methods, such as experience replay or exponential averaging \citep{lau2025fast}.

\begin{figure}[!h]
    \centering
    \includegraphics[width=0.9\linewidth]{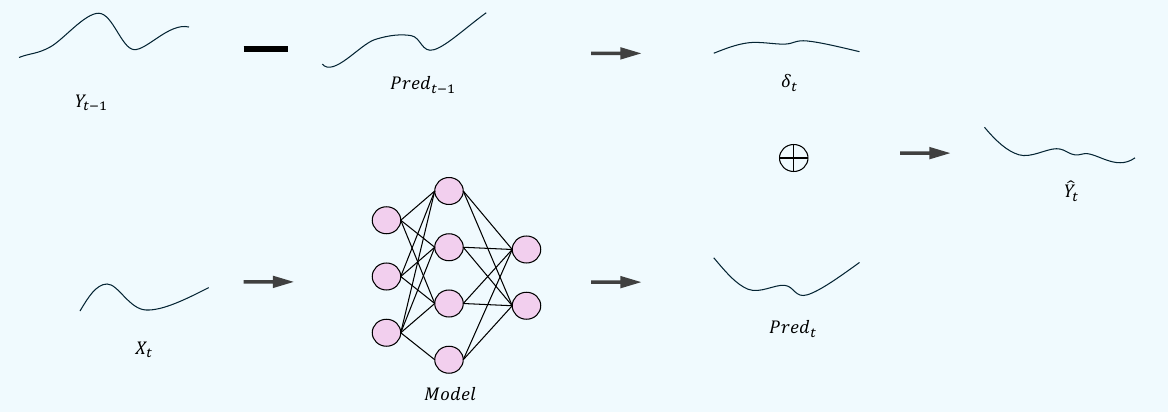}
    \caption{The motivation of our method}
    \label{fig:mov}
\end{figure}

However, traffic forecasting exhibits distinct characteristics such as strong periodicity and spatial correlations between regions. Consequently, directly applying generic time-series online prediction methods—designed for arbitrary non-stationary signals—may be suboptimal for capturing the structured nature of urban mobility shifts.

The proposed method is motivated by an empirical observation about how distribution shifts typically manifest in urban traffic demand. In real-world settings, such shifts are often triggered by persistent structural changes—for instance, the opening of a new commercial complex, long-term urban redevelopment, or large-scale socioeconomic events like economic downturns or public health crises \citep{wang2025evaluating}. These factors tend to induce sustained, directional deviations in demand (e.g., consistently higher or lower than historical patterns) rather than random or rapidly oscillating fluctuations \citep{wang2024impact,zhang2017impact,nian2020impact}.

Under such conditions, the prediction error from the most recent period—particularly the previous day—often reflects the ongoing systematic bias introduced by the shift. This suggests that directly incorporating yesterday’s signed error as a corrective term for today’s forecast can serve as an accurate, robust, and computationally efficient adaptation strategy. As illustrated in Figure \ref{fig:mov}, our approach adjusts today’s prediction by adding a smoothed residual from yesterday. To avoid overreacting to transient noise and maintain robustness under varying degrees of non-stationarity, we apply exponential smoothing enhanced by a mixture-of-experts mechanism that dynamically selects the most appropriate smoothing rate based on recent error dynamics. Moreover, recognizing that such systematic shifts frequently exhibit spatial coherence—neighboring regions often experience similar demand trends due to shared land use \citep{liu2025spatial} or event exposure \citep{hu2021examining}—we introduce an adaptive spatiotemporal smoothing module that propagates error signals across space and time.

We evaluate this strategy on seven real-world traffic demand datasets using three diverse backbone models. Results show that this lightweight adaptation framework consistently improves prediction accuracy under distribution shift, outperforming existing online methods while requiring significantly less computational overhead.

Our contributions are as follows.
\begin{itemize}
    \item We propose FORESEE, a novel online adaptation framework that leverages spatiotemporally smoothed prediction residuals from the previous day to correct future forecasts, avoiding full model retraining.
    \item We introduce an adaptive mixture-of-smoothing-experts mechanism that automatically selects optimal smoothing rates based on recent error dynamics, enhancing robustness to non-stationarity.
    \item We demonstrate through extensive experiments that FORESEE achieves state-of-the-art online adaptation performance across seven real-world datasets and three backbone models, with lower computational overhead than existing parameter-update-based methods.
\end{itemize}
\section{Related Work}
\subsection{Short-term traffic prediction}
Short-term traffic prediction aims to forecast traffic volume in specific regions over future time periods. This task involves both temporal and spatial dimensions. Traffic data form time series in each location, while data from different areas at the same time can be viewed as a graph. Thus, effective models must capture both temporal and spatial patterns.

Researchers have used various methods to handle these aspects. Temporal relationships are often modeled with convolution, attention, or recurrent neural networks \citep{li2023improving,yu2025clear,yu2017spatiotemporal}. Spatial relationships are commonly captured using graph convolution or spatial attention mechanisms \citep{zheng2020gman,guo2019attention,wei2024multi}.

There are three main model structures for integrating temporal and spatial information. One type combines temporal and spatial models into integrated spatiotemporal blocks \citep{yu2017spatiotemporal}. Another adds spatial modules inside temporal models \citep{li2018diffusion}. A third type applies spatial processing to each time step before temporal modeling \citep{zhao2019t}. Some studies also model spatiotemporal features jointly in single modules \citep{song2020spatial}. 

Regrading the training strategy, some approaches include self-supervised learning \citep{ji2023spatio} and meta-learning \citep{pan2019urban} to improve training. Recent work has also explored training with larger datasets across cities. These studies often draw inspiration from large language models, focusing on versatility and zero-shot ability \citep{xu2025gpt4tfp,yuan2024unist,li2024urbangpt}.  

For more details, several review articles offer further insights \citep{jin2024survey,jin2023spatio,du2025traffic}. 

However, most existing studies do not fully explore how to address the online updating problem in traffic prediction task.

\begin{table}[!htbp]
\centering
\small
\caption{The summary of existing online time series prediction methods}
\label{tab:summary}
\begin{tabular}{@{}p{3.5cm}p{5cm}p{7cm}@{}}
\toprule
Method                          & Parameters for   updating                                                                   & Updating style                                                                                                                                                 \\ \midrule
Onenet \citep{wen2023onenet}                   & Ensemble weights of   different models                                                                &  Exponentiated Gradient Descent and offline reinforcement learning                                                                                            \\
FSNet \citep{phamlearning}                    & Parameters in convolutional layers and feature modification weights after convolutional layers & Gradient decent with two-stream exponential moving averaging                                                                                                                          \\
D3A \citep{zhang2024addressing}                     & Full parameters of the model                                                              & 1. Update only when distribution drift happens. 2. Use accumulated and noised data to update the model\\
DSOF \citep{lau2025fast}                     & Parameters in an student model that maps the output of original model and the input to prediction   & Online gradient   decent combined with replay buffer \\
Proceed \citep{zhao2025proactive} & Low-rank scaling coefficients $\alpha,\beta$   for model parameters & Updating using gradients and estimated concept drifts                                                                                                       \\
SOLID \citep{chen2024calibration} & Parameters in the final Layer &1.       Update   only when distribution drift is detected.                                                                                                  2.   using samples with temporal proximity, periodic phase   similarity and input similarity to finetune                                                    \\
ELF \citep{leelightweight} & Parameters in a linear adapter mapping input to prediction and ensemble weight between base model and adapter               & Directly fit for parameters in the adapter; Exponentiated Gradient Descent for ensemble weights                                                                                                                                                                                                                                                \\
ADCSD \citep{guo2025online}                    & The parameters in an adapter that maps the output of original model to prediction               & Online gradient   decent                                                                                                                                       \\ \bottomrule
\end{tabular}
\end{table}

\subsection{Online prediction}

Our work falls within the broader paradigm of online prediction, where models adapt incrementally as new data arrive. 

The online prediction process can be divided to the following two steps: first, updating the model using newly acquired data, and then deploying this updated model at the next timestep. Upon arrival of new data at that subsequent timestep, the model is updated again and then applied to the following timestep, creating a repeating loop. The simplest approach like this cycle is Online Gradient Descent. Most existing online learning methods can be regraded as enhancements or variations of this fundamental OGD procedure. Consequently, the core challenges addressed by these methods involve two parts: determining which parameters to update and how to update them.

When it comes to determining which parameters to update, various strategies are adopted in existing literature. For instance, the OneNet approach \citep{wen2023onenet} adjusts the ensemble weights for two models, whereas FSNet \citep{phamlearning} updates parameters within convolutional layers along with feature modification weights. In contrast, DSOF \citep{lau2025fast} modifies only the weights of the student model. A common and notable trend is the introduction of a compact adapter module—such as a single linear layer \citep{leelightweight} or a small multi-layer perceptron (MLP) \citep{lau2025fast,kim2025battling,medeiros2025accurate}—for online adaptation, instead of updating the entire model. This strategy significantly decreases the number of parameters requiring optimization and enhances learning stability compared to full-model updates with individual data samples.

The second consideration is the methodology for updating the parameters. The most straightforward technique involves computing the loss on a single new sample, obtaining gradients via backpropagation, and then applying a gradient-based update. However, as noted, gradients estimated from a single sample can be highly unstable. To address this limitation, numerous methods enhance the basic scheme. A fundamental improvement is the use of experience replay, where a mini-batch of stored past samples is included in the loss calculation alongside the new data \citep{lau2025fast,chen2024calibration}. This approach aggregates multiple samples for gradient estimation, thereby reducing variance and stabilizing the update process. Beyond replay buffer, methods like FSNet \citep{phamlearning} propose a dual-stream architecture: one stream performs rapid updates based on recent samples, while the other stream updates more slowly using gradients accumulated over a long history. Finally, moving away from gradient-based optimization, some techniques such as ELF \citep{leelightweight} directly infer optimal parameters. In this case, the adapter is formulated as a linear model, allowing its optimal parameters to be computed analytically from a small batch of data, thereby bypassing the need for iterative gradient updates.

We collected recently proposed methods for online time-series forecasting. These methods are categorized in Table \ref{tab:summary} based on which parameters to update and how to update.
\subsection{Test-Time Adaptation}
Test-time adaptation (TTA) represents a closely related paradigm that seeks to adjust pre-trained models during deployment—without full retraining—to cope with distribution shifts. Formally, TTA refers to techniques that dynamically adapt a model using unlabeled test data at inference time to improve performance on the target domain. Initially developed for classification tasks, TTA can be broadly categorized into optimization-based, data-based, and model-based approaches. Optimization-based methods typically recalibrate normalization layer statistics (e.g., in BatchNorm), design unsupervised loss functions such as entropy minimization \citep{wangtent}, or employ mean-teacher frameworks for stable updates \citep{wang2022continual}. Data-based strategies enhance robustness through test-time data augmentation \citep{zhang2022memo} or memory banks for replay \citep{yuan2023robust}. Model-based techniques adapt the architecture itself—e.g., by inserting input transformation layers, adapter modules \citep{liuvida}, or prompt-tuning layers, especially in vision-language models \citep{liu2023meta}.

However, directly transferring these TTA paradigms to traffic demand forecasting (or time series forecasting) is non-trivial. Forecasting models often lack standard normalization layers (e.g., BatchNorm), and image-centric augmentations do not generalize to temporal sequences. In recent years, several works have adapted TTA specifically for time-series settings, including ADCSD \citep{guo2025online} , PETSA \citep{medeiros2025accurate}, and TAFAS \citep{kim2025battling}. These methods align with the two core dimensions discussed above: which parameters to update and how to update them. In terms of parameter selection, all three employ lightweight adapters—but with distinct designs: ADCSD applies its adapter only at the output head, whereas PETSA and TAFAS insert adapters at both input and output stages; notably, PETSA adopts a LoRA-inspired low-rank structure. Regarding update strategies, ADCSD relies on conventional gradient descent using observed history, while PETSA and TAFAS leverage partially observed future sequences (e.g., autoregressive context) to construct more informative adaptation signals. 

\subsection{Limitations of Existing Approaches}
Despite the progress in online time-series prediction and test-time adaptation, existing methods face several limitations when applied to short-term traffic demand forecasting under distribution shift:

First, most approaches rely on gradient-based parameter updates that require backpropagation through the model (or an adapter) at each adaptation step. This incurs significant computational overhead and can suffer from numerical instability—especially in realistic traffic settings where only a single new observation arrives per time step, providing insufficient signal for reliable gradient estimation.

Second, these methods are built on the implicit assumption that adaptation must be achieved by modifying internal model parameters. However, many real-world distribution shifts in traffic demand—such as those caused by new infrastructure, policy changes, or long-term events—manifest primarily as persistent, additive biases across regions and time, rather than fundamental alterations to the underlying spatiotemporal dynamics. In such scenarios, directly correcting predictions using recent residuals is not only sufficient but also more efficient and robust than re-optimizing model weights.

Finally, spatial structure is largely ignored in current online adaptation schemes. While backbone models may incorporate spatial graphs during offline training, the online update process typically treats each region independently (e.g., ELF, ADCSD). This overlooks the fact that systematic errors often exhibit spatial coherence across neighboring zones, which could be leveraged to improve correction quality and data efficiency.

These limitations highlight a critical need for a \textbf{lightweight, spatiotemporally aware} online correction framework. Therefore, we design a residual-based correction framework that operates solely on prediction errors, leveraging their temporal persistence and spatial smoothness to enable robust, efficient, and scalable online forecasting without any model modification.
\section{Method}
\subsection{Temporal Bias Correction via Exponential Smoothing and Expert Ensembling}
We first address the temporal instability of raw prediction errors by modeling systematic bias as an evolving signal. To illustrate this idea, 
consider a problem of forecasting traffic demand across $n$ regions by hour, a straightforward method is to adjust future predictions based on the model's recent error. 

For example, on day 1, the prediction of the model is $pred_1$  (shape of $pred_1$ is $(24, n, 2)$, 24 means the number of hours in one day and 2 means inflow and outflow) and the actual traffic demand is $y_1$ (same shape). The prediction error is: $err_1 =y_1  - pred_1$. Then in day 2, if the prediction of the model is $pred_2$ (shape $(24, n, 2)$), we refine the prediction to:
\begin{equation}
    \hat{y}_2= pred_2 + err_1
\end{equation}
While intuitive, our implementation revealed this basic adjustment method even performed worse than unadjusted predictions in practice. We attribute this failure to high variance in the error estimation. Traffic demands contain inherent noise, making the single-day error an unreliable correction signal. A correction derived solely from the previous day's error amplifies this noise, yielding unstable adjustments. To mitigate this instability and create a robust correction signal, we introduce exponential smoothing. For day $t$ we refine the prediction as:
\begin{equation}
    \hat{y}_t =pred_t  +\delta_t
\end{equation}
The correction for the next day ($\delta_{t+1}$) is calculated using a recursive formula:
\begin{equation}
    \delta_{t+1}  =\alpha \times \delta_t   + (1 -  \alpha)\times err_t,
\end{equation}
where $err_t$ is the prediction error in day $t$ and $\alpha$ ($0 \leq \alpha \leq 1$) is a smoothing factor. this mechanism effectively incorporates weighted contributions from all prior prediction errors, with the influence of older errors decaying exponentially over time. Consequently, $\delta_{t+1}$  represents a stabilized, low-variance estimation of model bias, filtering out daily noise and enabling reliable forecast adjustments.

As a result, determining the optimal exponential smoothing coefficient $\alpha$ presents a significant challenge. Our experiments on New York City bike and taxi datasets, which tested multiple $\alpha$ values, confirm the existence of an $\alpha$ that minimizes prediction error. Setting $\alpha$ too small or too large will increase prediction error. An excessively small $\alpha$ (undersmoothing) yields unstable corrections due to insufficient noise filtering, while an excessively large $\alpha$ (oversmoothing) introduces bias because historical correction terms fail to adapt to dynamically changing patterns. Critically, optimal $\alpha$ values differ across datasets, making it impractical to determine the ideal $\alpha$ before deployment.

We address this using a mixture of experts approach: First we initialize $k$ predictors with different $\alpha$ values: $\alpha_1,…,\alpha_k$, and their weights $w_{1,1},…,w_{k,1}$ , On day $t$, the final prediction is a weighted combination:
\begin{equation}
    \hat{y}_t=\sum_{i=1}^k{w_{i,t}\hat{y}_{i,t}}
\end{equation}
After seeing the actual value $y_t$ on day $t$, we update the weights for day $t+1$ as:
\begin{equation}
    w_{i,t+1}=\frac{w_{i,t}e^{-\eta(\hat{y}_{i,t}-y_t)^2}}{\sum_{j=1}^k{w_{j,t}e^{-\eta(\hat{y}_{j,t}-y_t)^2}}}
\end{equation}
This means if a predictor has a small error, its weight increases in the next day.

We provide some theorems to describe the performance exponential smoothing and mixture of experts in the Section \ref{sec:theo}. Specifically, Theorem \ref{the:pro} demonstrates that if the model's error remains relatively stable over consecutive days—for instance, if the model's prediction is higher than the true value on the previous day, it is likely to be higher on the next day as well—then using the previous day's prediction error to correct the next day's prediction will yield better results than deploying the model directly without such correction. Theorem \ref{theo:ema} indicates that selecting an appropriate smoothing coefficient $\alpha$ can balance the variance and bias of the predictions, thereby further reducing the error during deployment. Finally, Theorem \ref{theo:emas} shows that the long-term performance of our multi-$\alpha$ integration algorithm is nearly equivalent to the optimal $\alpha$.
\subsection{Leveraging Spatiotemporal Coherence for Robust Adjustment}

\begin{figure}[!h]
    \centering
    \includegraphics[width=\linewidth]{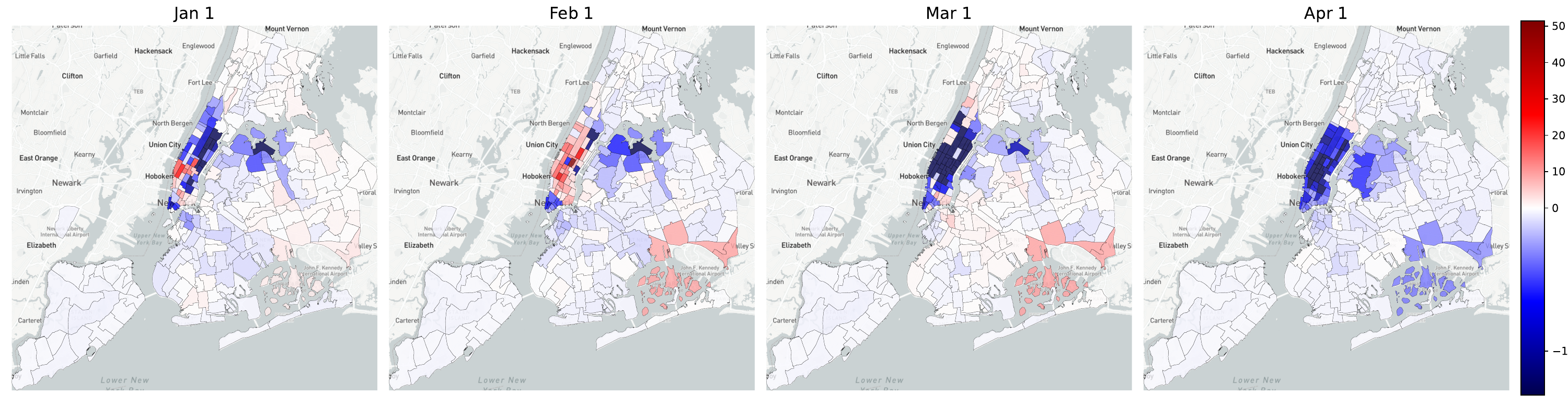}
    \caption{The prediction errors of taxi usage in New York City}
    \label{fig:err}
\end{figure}
\begin{figure}[!h]
    \centering
    \includegraphics[width=\linewidth]{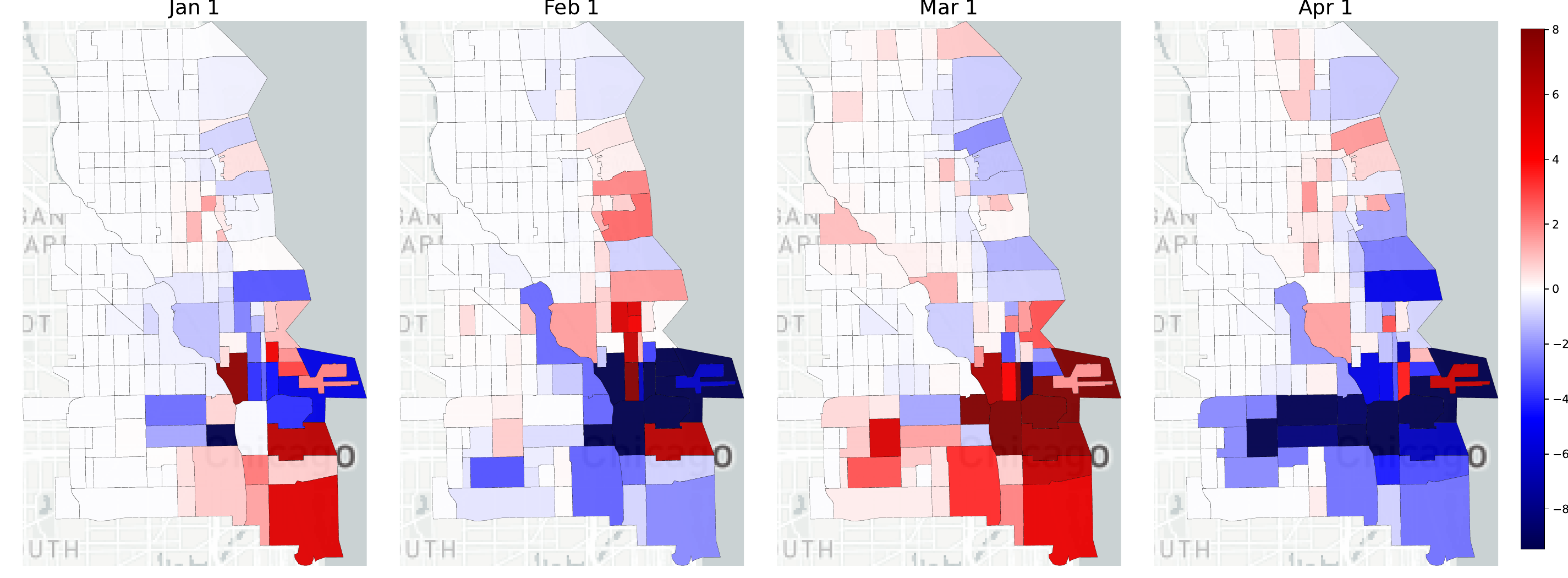}
    \caption{The prediction errors of taxi usage in Chicago City}
    \label{fig:err2}
\end{figure}
The above method can actually be applied to any periodic online time series forecasting problem. However, for traffic prediction, which is a spatiotemporal problem, there might be unique features that can be utilized to achieve better results. So, what is the most important difference between traffic prediction and general time series prediction problem? Traffic prediction involves not only a temporal dimension but also a spatial dimension. This means adjacent areas may share certain characteristics. Therefore, we visualize the prediction error of taxi pickup volumes across all zones in New York City and Chicago at 8:00 AM on January 1, February 1, March 1, and April 1, 2020, in Figure \ref{fig:err} and Figure \ref{fig:err2}. A clear pattern emerged: the prediction errors in neighboring areas tend to be similar. If the traffic demand in one area is underestimated, it is highly likely that the traffic demand in surrounding areas will also be underestimated, and vice versa. 

\begin{figure}[!h]
    \centering
    \includegraphics[width=0.95\linewidth]{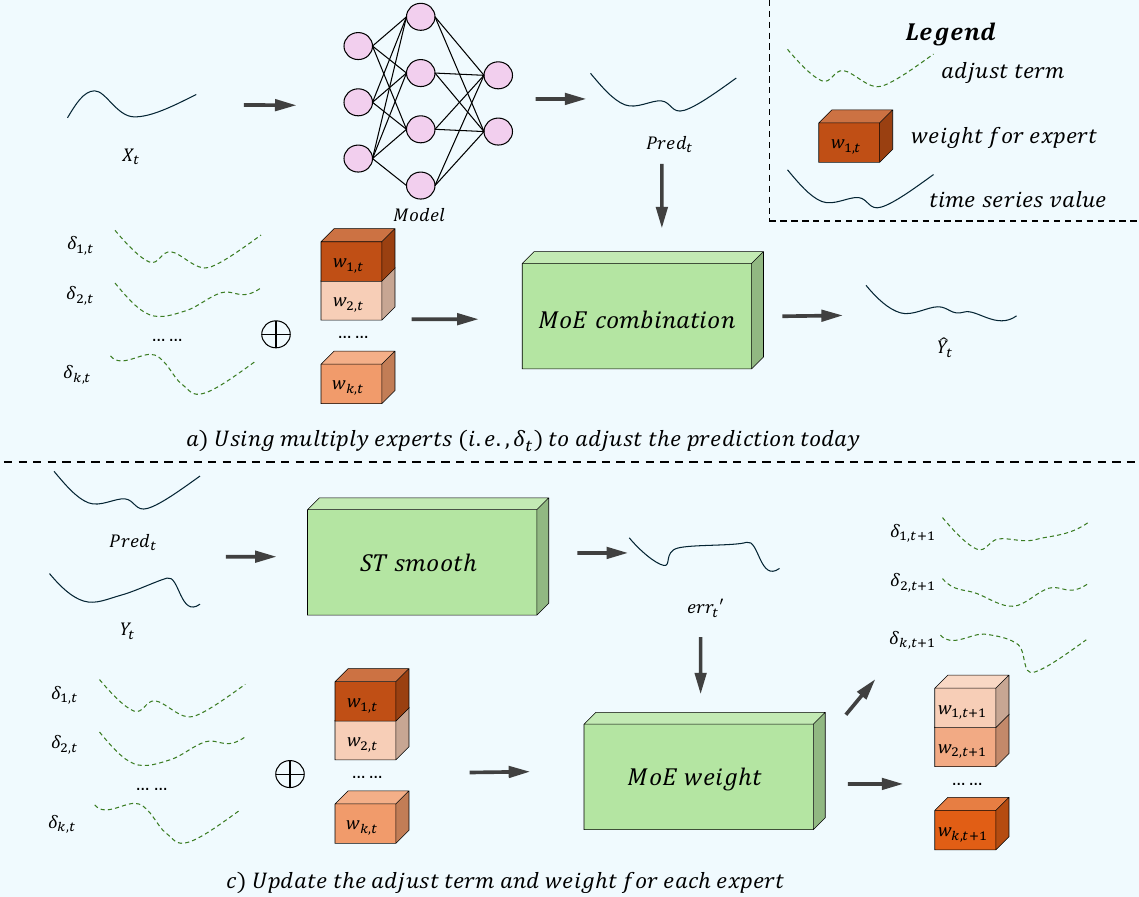}
    \caption{The workflow of our method}
    \label{fig:workflow}
\end{figure}
This suggests that the adjustments needed for different regions may not be independent but rather correlated. Since the adjustment amounts for adjacent areas are related, we can apply a spatial smoothing operation to obtain a more robust estimation of the required adjustments for each region.

We select the most basic graph convolution method for spatial smoothing. Specifically, for a region $i$, if its error is ${err}_i$, and it has $m$ neighbors with adjustment terms ${err}_1,\ldots,{err}_m$, then we update ${err}_i$ as:
\begin{equation}
    {err}_i^\prime=\left(1-\gamma\right){err}_i+\gamma\frac{\sum_{j=1}^{m}{err}_m}{m}
\end{equation}

Similarly, it can also be found that the adjustment values for different time slots are not independent. For example, if each time slot means one hour and the prediction at 6 AM tends to be overestimated, it is highly likely that the prediction at 7 AM will also be overestimated. Therefore, if we want to determine the adjustment for 7 AM, we can not only use the true value minus the predicted value from 7 AM of the previous day but also take a weighted average of errors from nearby time slots—such as 6 AM and 8 AM. Mathematically, this is defined as a one-dimensional convolution. We only need to define a convolution kernel and perform convolution along the time dimension.

The next question is how to determine the parameter $\gamma$ in spatial smoothing and the convolution kernel in temporal smoothing? In fact, both spatial and temporal smoothing are essentially linear transformations. These parameters are differentiable—that is, we can compute the loss after observing the true traffic demand, then back propagate it, and obtain gradients for both parameters. Therefore, in practical deployment, we propose an adaptive method to determine the smoothing parameters using online gradient descent. 

\subsection{The FORESEE Frame  work: Integration and Deployment}

Building upon the temporal correction mechanism and spatiotemporal smoothing modules, we integrate them into a unified online forecasting framework named \textbf{FORESEE} (\textbf{F}orecasting \textbf{O}nline with \textbf{R}esidual \textbf{S}moothing and \textbf{E}nsemble of \textbf{E}xperts). And we provide the pseudo code in following Algorithm \ref{alg:main} and Algorithm \ref{alg:ts_smooth}. The workflow of our method is shown in Figure \ref{fig:workflow}
\begin{algorithm}[!h]
\caption{FORESEE: Forecasting Online with Residual Smoothing and Ensemble Experts}
\label{alg:main}
\begin{algorithmic}[1]
\State \textbf{Initialize:}
\State \quad Set of smoothing factors $\mathcal{A} = \{\alpha_1, \alpha_2, \dots, \alpha_k\}$
\State \quad Expert weights: $w_{i,1} \gets \frac{1}{k}$ for $i = 1, \dots, k$
\State \quad Adjustment terms: $\delta_{i,1} \gets \mathbf{0}$ for $i = 1, \dots, k$ (shape $n \times 24\times2$)
\State \quad Graph $\mathcal{G}$ (adjacency matrix), spatial smooth parameter $\gamma$ ,temporal kernel $\mathbf{K}$

\State \quad Learning rates: $\eta$ (for experts), $\eta_\gamma$, $\eta_K$ (for OGD on smoothing parameters)

\For{each day $t = 1, 2, \dots$}
    \State Receive base model prediction $\mathbf{pred}_t$ (shape $n \times 24 \times2$)
    \For{each expert $i = 1$ to $k$}
        \State $\mathbf{\hat{y}}_{i,t} \gets \mathbf{pred}_t + \delta_{i,t}$ \Comment{Apply adjustment}
    \EndFor
    \State $\mathbf{\hat{y}}_t \gets \sum_{i=1}^k w_{i,t} \cdot \mathbf{\hat{y}}_{i,t}$ \Comment{Combine expert predictions}
    \State Observe true value $\mathbf{y}_t$
    \State $\mathbf{err}_t \gets \mathbf{y}_t - \mathbf{pred}_t$ \Comment{Compute raw prediction error}
    
    \State $\mathbf{err}_t^{\text{smooth}} \gets \text{TemporalSpatialSmooth}(\mathbf{err}_t, \mathcal{G}, \mathbf{K}, \gamma)$ \Comment{Alg. 2: Smooth the error}
    
    \For{each expert $i = 1$ to $k$}
        \State $\delta_{i,t+1} \gets \alpha_i \cdot \delta_{i,t} + (1 - \alpha_i) \cdot \mathbf{err}_t^{\text{smooth}}$ \Comment{Update adjustment using smoothed error}
    \EndFor

    \State \textbf{Update smoothing parameters via Online Gradient Descent:}
    \State Observe true value $y_t$ at the end of day $t$
    \State $\mathcal{L}_t \gets \left\| \mathbf{\hat{y}}_t - \mathbf{y}_t \right\|^2_2$ \Comment{Compute loss}
    \State $\gamma \gets \gamma - \eta_\gamma \cdot \frac{\partial \mathcal{L}_t}{\partial \gamma}$ \Comment{Update spatial parameter}
    \State $\mathbf{K} \gets \mathbf{K} - \eta_K \cdot \frac{\partial \mathcal{L}_t}{\partial \mathbf{K}}$ \Comment{Update temporal kernel parameters}

    \State \textbf{Update expert weights:}
    \For{each expert $i = 1$ to $k$}
        \State $L_{i,t} \gets \left\| \mathbf{\hat{y}}_{i,t} - \mathbf{y}_t \right\|^2_2$ \Comment{Squared error for expert $i$}
        \State $w_{i,t+1} \gets \frac{w_{i,t} \cdot \exp(-\eta \cdot L_{i,t})}{\sum_{j=1}^k w_{j,t} \cdot \exp(-\eta \cdot L_{j,t})}$ \Comment{update weights}
    \EndFor
\EndFor
\end{algorithmic}
\end{algorithm}

\begin{algorithm}[!h]
\caption{Spatiotemporal Smoothing of Error} 
\label{alg:ts_smooth}
\begin{algorithmic}[1]
\Function{TemporalSpatialSmooth}{$\mathbf{err}, \mathcal{G}, \mathbf{K}, \gamma$}
    \State \textbf{Input:} Raw error $\mathbf{err}$ ($n \times 24 \times 2$), graph $\mathcal{G}$, kernel $\mathbf{K}$, factor $\gamma$
    \State \textbf{Output:} Smoothed error $\mathbf{err}^{\text{smooth}}$ ($n \times 24 \times2$)
    \State Initialize $\mathbf{err}^{\text{spatial}} \gets \mathbf{0}_{n \times 24 \times 2}$
    \For{each region $i = 1$ to $n$}
        \State $\mathcal{N}(i) \gets \{j \mid \mathcal{G}_{ij} = 1\}$ \Comment{Get neighbors of region $i$}
        \State $m \gets |\mathcal{N}(i)|$
        \If{$m > 0$}
            \For{each hour $h = 1$ to $24$}
                \State $\mathbf{err}^{\text{spatial}}[i, h] \gets (1-\gamma) \cdot \mathbf{err}[i, h] + \gamma \cdot \frac{1}{m} \sum_{j \in \mathcal{N}(i)} \mathbf{err}[j, h]$
            \EndFor
        \Else
            \State $\mathbf{err}^{\text{spatial}}[i, :] \gets \mathbf{err}[i, :]$ \Comment{No neighbors, no spatial smoothing}
        \EndIf
    \EndFor
    \For{each region $i = 1$ to $n$}
        \State $\mathbf{err}^{\text{smooth}}[i, :] \gets \text{conv1d}(\mathbf{err}^{\text{spatial}}[i, :], \mathbf{K}, \text{mode='same'})$ \Comment{Apply temporal convolution}
    \EndFor
    \State \Return $\mathbf{err}^{\text{smooth}}$
\EndFunction
\end{algorithmic}
\end{algorithm}

\section{Theoretical analysis}
\label{sec:theo}
Since our method is relatively traditional—for example, using past errors to correct future predictions, applying EMA to smooth these corrections, and employing mixture-of-experts for hyperparameter adaptation—this implies that our approach possesses theoretically analyzable properties. Therefore, in this section, we provide theoretical guarantees for the use of past errors to correct future predictions, the EMA method, and the mixture-of-experts approach.

It should be noted that these theoretical insights are fairly straightforward for readers familiar with online prediction (such as online convex optimization). Additionally, although we use spatiotemporal smoothing to address spatiotemporal forecasting—and empirically observe its effectiveness—theoretical analysis for this would be complex. Thus, we omit this part.

\subsection{Data generation mechanism}

First, we make some assumptions about the data generation mechanism.

Suppose in each day $t$, the data is $y_t\in R^{24\times n\times2}$ where $n$ is the number of regions. $y_{t,i,j,0}$ means the inflow of the $j$-th region in the $i$-hour of the $t$-day. And we have a model $f$ to predict $y_t$. And $y_t$ is generated as:
\begin{equation}
    y_t=f(x_t)+g_t+\epsilon_t
\end{equation}

Where $x_t$ is the input of the model and $g_t$ is the bias term with the same shape of $y_t$, $\epsilon_t$ is independent random noise with mean 0 and covariate matrix $\Sigma_t$ of shape $R^{\left(24\times n\times2\right)\times(24\times n\times2)}$. Therefore, $g_t$ represents the mismatch between the prediction model and the actual data generation mechanism, while $\epsilon_t$ denotes the inherent uncertainty of the problem itself. And we assume $g_t$ follows a random walk process such that: $g_{t+1}=g_t+v_t$, $v_t$ is a random vector and independent in each time step with mean 0. We use the following example to illustrate the meaning of random work bias assumption. If the model's predicted value from the previous day is slightly larger than the actual value—for instance, larger by six—then on the next day, the model's prediction compared to the actual value might be larger by five to seven with high probability. In other words, it may fluctuate slightly upward or downward around the value of six, but overall, it generally remains close to six.

\subsection{Theorems}
\newtheorem{theorem}{Theorem} 
\begin{theorem}[The performance of original model]
If we let $\hat{y}_t=f(x_t)$, then the prediction error during a period of $T$ days will be:
$$
\mathbb{E}\sum_{t=1}^{T}\left|\left|\hat{y}-y_t\right|\right|_2^2=\mathbb{E}\sum_{t=1}^{T}\left|\left|g_t\right|\right|_2^2+\sum_{t=1}^{T}{Tr(\mathrm{\Sigma}_t)}
$$
\label{Tho:ori}
\end{theorem}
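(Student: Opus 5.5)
The plan is to substitute the data-generation model $y_t=f(x_t)+g_t+\epsilon_t$ directly into the loss. With $\hat{y}_t=f(x_t)$ this gives
\[
\hat{y}_t-y_t=-(g_t+\epsilon_t).
\]
Expanding the squared norm for each day $t$ then yields
\[
\|\hat{y}_t-y_t\|_2^2=\|g_t\|_2^2+2\langle g_t,\epsilon_t\rangle+\|\epsilon_t\|_2^2 .
\]
After taking expectations and summing over $t=1,\dots,T$, the result follows by linearity of expectation once we identify the expectation of each of the three terms.

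The first term is left as $\mathbb{E}\|g_t\|_2^2$, which matches the first sum in the statement. For the third term, $\epsilon_t$ has mean zero and covariance $\Sigma_t$. Viewing $\epsilon_t$ as a vector in $R^{24\times n\times 2}$, we get $\mathbb{E}\|\epsilon_t\|_2^2=\sum_k \mathbb{E}\epsilon_{t,k}^2=\sum_k(\Sigma_t)_{kk}=Tr(\Sigma_t)$. This term is deterministic, so it comes outside the expectation, as in the statement.

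The only step needing care is showing that the cross term vanishes, $\mathbb{E}\langle g_t,\epsilon_t\rangle=0$. I will use the standing assumption that $\epsilon_t$ is independent random noise, read as independent of the bias process: $g_t=g_1+\sum_{s<t}v_s$ is built from $g_1$ and the increments $v_s$, and none of these involve $\epsilon_t$. Conditioning on $g_t$ gives
\[
\mathbb{E}\langle g_t,\epsilon_t\rangle=\mathbb{E}\langle g_t,\mathbb{E}[\epsilon_t\mid g_t]\rangle=0.
\]
This requires $g_t$ to have finite second moments, which we assume implicitly whenever $\mathbb{E}\|g_t\|_2^2$ is finite; if it is infinite, both sides of the identity are infinite. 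I expect this independence step to be the main point to state explicitly, since the paper's assumption is phrased informally. The random-walk structure of $g_t$ plays no role in this theorem; it matters only for the later comparison theorems. Summing the three contributions over $t$ completes the argument.
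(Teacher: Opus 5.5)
Your proposal is correct and follows the same route as the paper's proof. You substitute $y_t=f(x_t)+g_t+\epsilon_t$, use that $\epsilon_t$ is mean zero and independent of $g_t$ to drop the cross term, and identify $\mathbb{E}\|\epsilon_t\|_2^2=Tr(\Sigma_t)$; the only difference is that you write out the norm expansion and the conditioning step that the paper compresses into a single line.
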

\begin{theorem}[The performance of using errors in the proceeding day to adjust the prediction in the next day]
    If we let $\hat{y}=f\left(x_t\right)+(y_{t-1}-f\left(x_{t-1}\right))$, Then the prediction error during a period of $T$ days will be:
    \begin{equation}
        \mathbb{E}\sum_{t=1}^{T}\left|\left|\hat{y}_t-y_t\right|\right|_2^2=\mathbb{E}\sum_{t=2}^{T}\left|\left|v_t\right|\right|_2^2+Tr\left(\mathrm{\Sigma}_1\right)+Tr(\mathrm{\Sigma}_T)+2\sum_{t=2}^{T-1}{Tr(\mathrm{\Sigma}_t)}
    \end{equation}
    For big $T$ we can omit $Tr\left(\mathrm{\Sigma}_1\right)+Tr(\mathrm{\Sigma}_t)$ and approximate these bound as:
    \begin{equation}
        \mathbb{E}\sum_{t=1}^{T}\left|\left|\hat{y}_t-y_t\right|\right|_2^2=\ \ \mathbb{E}\sum_{t=2}^{T}\left|\left|v_t\right|\right|_2^2+2\sum_{t=1}^{T}{Tr(\mathrm{\Sigma}_t)}
    \end{equation}
    \label{the:pro}
\end{theorem}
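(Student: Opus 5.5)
The plan is a direct computation of the per-day residual of the corrected predictor, followed by a bias–variance expansion in which all cross terms vanish.

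\textbf{Step 1 (per-day residual).} For $t\ge 2$, substitute the data generation mechanism $y_s=f(x_s)+g_s+\epsilon_s$ for $s=t-1$ and $s=t$. The $f$ terms cancel:
\begin{equation*}
\hat{y}_t-y_t = f(x_t)+\bigl(g_{t-1}+\epsilon_{t-1}\bigr)-f(x_t)-g_t-\epsilon_t = -(g_t-g_{t-1})+\epsilon_{t-1}-\epsilon_t .
\end{equation*}
By the random-walk assumption, $g_t-g_{t-1}$ is the increment $v_{t-1}$. So the residual is $-v_{t-1}+\epsilon_{t-1}-\epsilon_t$, and the persistent bias $g_t$ has been removed entirely.

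\textbf{Step 2 (expectation of the square).} Expand
\begin{equation*}
\mathbb{E}\left\|-v_{t-1}+\epsilon_{t-1}-\epsilon_t\right\|_2^2 .
\end{equation*}
The cross terms vanish, using three facts:
\begin{itemize}
\item the noises $\epsilon_{t-1},\epsilon_t$ are independent and mean zero;
\item $v_{t-1}$ is mean zero;
\item I will assume explicitly that the increments $v$ are independent of the noise sequence $\epsilon$, which is implicit in the paper's model.
\end{itemize}
By the same argument used for Theorem \ref{Tho:ori}, $\mathbb{E}\|\epsilon_s\|_2^2=Tr(\Sigma_s)$. Hence the day-$t$ expected loss is
\begin{equation*}
\mathbb{E}\|v_{t-1}\|_2^2+Tr(\Sigma_{t-1})+Tr(\Sigma_t).
\end{equation*}

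\textbf{Step 3 (summing over days).} Sum over $t=2,\dots,T$. The increment terms give $\sum_{t=2}^{T}\mathbb{E}\|v_{t-1}\|_2^2$. This matches the stated $\sum_{t=2}^T\mathbb{E}\|v_t\|_2^2$ up to the indexing convention for the increments, i.e.\ labelling the increment entering day $t$ as $v_t$. I will adopt that relabelling and note it.

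For the noise terms, each $Tr(\Sigma_s)$ with $2\le s\le T-1$ appears twice: once as the "current" noise on day $s$ and once as the "previous" noise on day $s+1$. The endpoints $Tr(\Sigma_1)$ and $Tr(\Sigma_T)$ each appear once. This gives exactly
\begin{equation*}
Tr(\Sigma_1)+Tr(\Sigma_T)+2\sum_{t=2}^{T-1}Tr(\Sigma_t).
\end{equation*}
The large-$T$ approximation then follows by absorbing the two boundary traces, which are $O(1)$ compared with the $\Theta(T)$ sum, into $2\sum_{t=1}^T Tr(\Sigma_t)$.

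\textbf{Main obstacle: day $t=1$.} The rule uses $y_0-f(x_0)$, which is not part of the $T$-day window. The stated identity contains no $g_1$ term, so day 1 must be handled by a convention. The natural choice is that the sum effectively runs over the $T-1$ corrected days $t=2,\dots,T$, or equivalently that the window is shifted by one day. I will state this convention explicitly rather than try to force an exact match, since with a literal day-1 prediction $f(x_1)$ one would pick up an extra $\mathbb{E}\|g_1\|_2^2+Tr(\Sigma_1)$.

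The substantive comparison with Theorem \ref{Tho:ori} is then immediate. The bias energy $\sum_t\mathbb{E}\|g_t\|_2^2$ is replaced by the increment energy $\sum_t\mathbb{E}\|v_t\|_2^2$, at the cost of doubling the noise term.
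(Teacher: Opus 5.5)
Your proposal is correct and follows the same route as the paper's proof: substitute $y_s=f(x_s)+g_s+\epsilon_s$, cancel the $f$ terms to leave $g_{t-1}-g_t$ plus noise, remove the cross terms using independence and zero means, and sum over days. Your bookkeeping is tighter than the paper's. The paper writes the per-day noise contribution as $2Tr(\Sigma_t)$ and passes over the $v$-indexing and the day-1 issue, whereas your $Tr(\Sigma_{t-1})+Tr(\Sigma_t)$, summed over $t=2,\dots,T$, gives the stated boundary terms exactly.
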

\textbf{Remark:} Comparing the results of Theorem \ref{Tho:ori} and Theorem \ref{the:pro}, we see a difference in the first term. In Theorem \ref{Tho:ori}, the first term is the sum of the squares of $g_t$. In Theorem \ref{the:pro}, the first term is the sum of the squares of the differences between $g_t$ and $g_{t-1}$, i.e, $v_t$.

We know that $g_t$ represents the overall bias, which means on day $t$, the average difference between the predicted value and the true value. In adjacent days, this value is often similar. For instance, during a pandemic, which reduces traffic demand, if we know the prediction is more the the true value by 10 on one day, then we can expect a similar error on the next day. As a result, the first term in Theorem \ref{the:pro} tends to be smaller than in Theorem \ref{Tho:ori}. For example, if the biases over three days are 10, 11, and 12, Theorem \ref{Tho:ori} gives an error of $10^2 + 11^2 + 12^2$, while Theorem \ref{the:pro} gives an error of $1^2 + 1^2$ (since the changes are 1 and 1), which is much smaller.

However, the second term in Theorem \ref{Tho:ori} is the sum of the random errors, while in Theorem \ref{the:pro}, it is twice the sum of the random errors. This is because in Theorem \ref{the:pro}, besides the inherent randomness, we use the error from the previous day to adjust the prediction on the next day, and that error contains noise, which increases the second term.
\begin{theorem}[The performance of using EMA errors in the proceeding days to adjust the prediction on the next day]
    If we let $\delta_t=\alpha\delta_{t-1}+(1-\alpha)\left(y_{t-1}-f\left(x_{t-1}\right)\right)$ and$\ \hat{y}_t=f\left(x_t\right)+\delta_t$, Then the prediction error during a period of $T$ days will approximate (for large $T$):
    \begin{equation}
        \mathbb{E}\sum_{t=1}^{T}||\hat{y}_t-y_t||^2_2=\frac{\mathbb{E}\sum_{t=1}^{T-1}\left|\left|v_{t+1}\right|\right|^2}{1-\alpha^2}+\frac{(2+\alpha)\sum_{t=1}^{T-1}{{Tr(\mathrm{\Sigma}}_t)}}{1+\alpha}
    \end{equation}
    \label{theo:ema}
\end{theorem}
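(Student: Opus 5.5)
The plan is to write the one-step error of the EMA-corrected forecast explicitly as a linear combination of the independent innovations $v_s$ and noises $\epsilon_s$. Since the two families have mean zero and are independent, the expected squared norm splits into a bias part and a noise part with no cross terms. I would then evaluate each part in the large-$T$ (stationary) regime, as in the proof of Theorem \ref{the:pro}.

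\textbf{Step 1 (unrolling).} Under the generation mechanism, $y_{s}-f(x_{s})=g_{s}+\epsilon_{s}$. Unrolling the recursion from $\delta_1=0$ gives
\begin{equation*}
\delta_t=(1-\alpha)\sum_{j=0}^{t-2}\alpha^{j}\left(g_{t-1-j}+\epsilon_{t-1-j}\right).
\end{equation*}
The forecast error is therefore $\hat{y}_t-y_t=(\delta_t-g_t)-\epsilon_t$.

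\textbf{Step 2 (bias part).} I would write $g_{t-1-j}=g_t-\sum_{s=t-j}^{t} v_{s}$ (with $v_s$ denoting $g_s-g_{s-1}$) and use $(1-\alpha)\sum_{j\ge 0}\alpha^j=1$ up to an $O(\alpha^{t})$ tail, which is negligible for large $T$. By Abel summation the deterministic-trend parts cancel, leaving
\begin{equation*}
\delta_t^{g}-g_t\approx-\sum_{k\ge 0}\alpha^{k}v_{t-k}.
\end{equation*}
Independence and zero mean of the $v$'s make cross terms vanish, so $\mathbb{E}\|\cdot\|^2=\sum_k\alpha^{2k}\mathbb{E}\|v_{t-k}\|^2$. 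Summing over $t$ and exchanging the order of summation, each $\mathbb{E}\|v_{s}\|^2$ appears with total weight $\sum_k\alpha^{2k}=1/(1-\alpha^2)$ (edge effects again negligible). This yields the first term $\mathbb{E}\sum_{t}\|v_{t+1}\|^2/(1-\alpha^2)$.

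\textbf{Step 3 (noise part).} The noise contribution is $-\epsilon_t+(1-\alpha)\sum_{j\ge0}\alpha^j\epsilon_{t-1-j}$. The current $\epsilon_t$ is independent of all past noise and contributes $\mathrm{Tr}(\Sigma_t)$. For the past noise, summing over $t$ and collecting, each $\mathrm{Tr}(\Sigma_s)$ receives weight $(1-\alpha)^2\sum_j\alpha^{2j}=(1-\alpha)^2/(1-\alpha^2)$. This is the step I expect to be the main obstacle, because the constant must be handled carefully to land on the form stated in Theorem \ref{theo:ema}.

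The squared-weight computation gives $(1-\alpha)/(1+\alpha)$, which is bounded above by $(1-\alpha)\sum_j\alpha^{2j}=1/(1+\alpha)$ since $(1-\alpha)^2\le 1-\alpha$. Combining this with the $\mathrm{Tr}(\Sigma_t)$ from the current noise gives the per-day coefficient $1+\tfrac{1}{1+\alpha}=\tfrac{2+\alpha}{1+\alpha}$. The stated expression thus holds as the large-$T$ approximation in the paper's sense, read as an upper bound on the noise term. I would state explicitly that the exact squared-weight constant is the smaller $\tfrac{2}{1+\alpha}$, so the equality in Theorem \ref{theo:ema} as written does not follow from this argument. The tight version is $\tfrac{2}{1+\alpha}$.

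\textbf{Step 4 (sanity check).} As a final consistency check, $\alpha=0$ should recover Theorem \ref{the:pro}: bias weight $1$ and noise coefficient $2$. Both the stated $\tfrac{2+\alpha}{1+\alpha}$ and the exact $\tfrac{2}{1+\alpha}$ equal $2$ there, so this check cannot distinguish them. The two agree only at $\alpha=0$ and differ for every $\alpha\in(0,1]$. The check also exhibits the bias–variance trade-off in $\alpha$ that the remark after Theorem \ref{theo:ema} relies on.
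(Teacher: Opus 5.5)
Your computation is correct, and the mismatch you found is an error in the paper, not a gap in your argument. The routes are organized differently.

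\textbf{How the paper argues.} It uses the same variable $e_t=\delta_t-g_t$, but instead of unrolling it squares the one-step recursion $e_{t+1}=\alpha e_t+(g_t-g_{t+1})+(1-\alpha)\epsilon_t$. It drops the cross terms by independence, sums over $t$, and discards the boundary terms $e_1^2$, $e_T^2$. When squaring, it writes the noise contribution as $(1-\alpha)\epsilon_t^2$ instead of $(1-\alpha)^2\epsilon_t^2$. That is exactly your replacement of $(1-\alpha)^2$ by $1-\alpha$, and it is the only source of $\frac{2+\alpha}{1+\alpha}$. With the correct factor, the paper's own telescoping gives $(1-\alpha^2)\sum_t\mathbb{E}\|e_t\|^2\approx\sum_t\mathbb{E}\|v_t\|^2+(1-\alpha)^2\sum_t\mathrm{Tr}(\Sigma_t)$, hence
\[
\mathbb{E}\sum_{t=1}^{T}\|\hat{y}_t-y_t\|_2^2\approx\frac{\sum_t\mathbb{E}\|v_t\|^2}{1-\alpha^2}+\frac{2}{1+\alpha}\sum_t\mathrm{Tr}(\Sigma_t),
\]
which is what your moving-average representation yields.

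\textbf{What each route buys.} Once the slip is fixed, the two routes agree. The recursion is shorter and handles time-varying $\Sigma_t$ and $\mathbb{E}\|v_t\|^2$ with one summation and only two boundary terms. Your explicit weights ($\alpha^k$ on the innovations, $(1-\alpha)\alpha^j$ on past noise) make the squared coefficient hard to miss. Your Step~2 is in fact exact: in your indexing, $\delta_t^{g}-g_t=-\alpha^{t-1}g_1-\sum_{k=0}^{t-2}\alpha^k v_{t-k}$, so the only tail is $\alpha^{t-1}g_1$.

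\textbf{Do not present the upper-bound reading as a proof of the statement.} The gap between the two constants is $\frac{\alpha}{1+\alpha}\sum_t\mathrm{Tr}(\Sigma_t)$. This is of order $T$, the same order as the terms being kept, so it cannot be absorbed into a large-$T$ approximation. As an approximate equality, the theorem is false for every $\alpha\in(0,1)$ with nondegenerate noise.

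\textbf{The decisive check is $\alpha=1$, not $\alpha=0$.} The noise part can be isolated because it is independent of the $v$-part. With $\delta_1=0$ and $\alpha=1$, no correction is ever applied, so the noise contribution must equal the $\sum_t\mathrm{Tr}(\Sigma_t)$ of Theorem~\ref{Tho:ori}. Your $\frac{2}{1+\alpha}$ gives coefficient $1$; the stated $\frac{2+\alpha}{1+\alpha}$ gives $\frac{3}{2}$. Equivalently, $\frac{1-\alpha}{1+\alpha}$ is the classical stationary variance factor of an EWMA with weight $1-\alpha$.

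\textbf{What to conclude.} The correct conclusion of your argument is the theorem with noise coefficient $\frac{2}{1+\alpha}$. This is at most $2$, so the paper's bias--variance remark remains intact. Also state explicitly that your $v_s=g_s-g_{s-1}$ is shifted by one from the paper's convention $g_{t+1}=g_t+v_t$; this is harmless for large $T$.
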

\textbf{Remark:} We can compare this result with Theorem \ref{the:pro}. First, the first term changes from $\sum_{t=1}^{T-1}\left|\left|v_{t+1}\right|\right|^2$ to $\frac{\sum_{t=1}^{T-1}\left|\left|v_{t+1}\right|\right|^2}{1-\alpha^2}$. This term becomes larger. However, the second term becomes smaller because $\frac{(2+\alpha)\sum_{t=1}^{T-1}{{Tr(\Sigma}_t)}}{1+\alpha}$ is less than $2\sum_{t=1}^{T}{Tr(\Sigma_t)}$. This is a classic bias-variance trade-off. When we use exponential moving average to estimate the correction needed in each day, we introduce the errors from previous days, so the bias term (first term) increases. But because we incorporate more information from the prediction errors in more days, the variance is reduced, so the second term decreases. Thus, there will be an optimal $\alpha$ that balances bias and variance, resulting in the best prediction outcome.
\begin{theorem}[The performance of using multiple EMAs and mixture of experts]
    If we use multiple EMAs (each EMA can be called as an expert) with different $\alpha$s and use the following equation to update the weight for each expert:
    \begin{equation}
        w_{i,t+1}=\frac{w_{i,t}e^{-\eta\left({\hat{y}}_{i,t}-y_t\right)^2}}{\sum_{j}{w_{j,t}e^{-\eta\left({\hat{y}}_{j,t}-y_t\right)^2}}}
    \end{equation}
    And the final combined prediction $\hat{y}_t$ is:
    \begin{equation}
        {\hat{y}}_t=\sum_{i=1}^{k}{{\hat{y}}_{i,t}w_{i,t}}
    \end{equation}
    And let $L_i=\min_i{\frac{\sum_{t=1}^{T-1}\left|\left|v_{t+1}\right|\right|^2}{1-\alpha_i^2}+\frac{(2+\alpha_i)\sum_{t=1}^{T-1}{{Tr(\Sigma}_t)}}{1+\alpha_i}}$ be the test error with the best expert and $\left|\left|\hat{y}-y_t\right|\right|_2^2$ is bounded by $B$ in all $t$.
Then there exists a best $\eta$ such that:
\begin{equation}
    \mathbb{E}\sum_{t=1}^{T}\left(\hat{y}_t-y_t\right)^2\le L_i+\sqrt{\frac{1}{2}TB^2logk}
\end{equation}
\label{theo:emas}
\end{theorem}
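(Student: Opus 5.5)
The approach is the standard regret bound for exponential weights, also known as the Hedge or exponentially weighted average forecaster, applied to $k$ experts. Expert $i$ is the EMA predictor with smoothing factor $\alpha_i$, and it suffers loss $\ell_{i,t}=\|\hat{y}_{i,t}-y_t\|_2^2\in[0,B]$ on day $t$. The plan has three parts: obtain a deterministic, pathwise regret bound against the best fixed expert; take expectations; and substitute the per-expert expected losses from Theorem \ref{theo:ema}.

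\textbf{Step 1 (potential argument).} Define the potential $W_t=\sum_{i=1}^k w_{i,1}e^{-\eta\sum_{s<t}\ell_{i,s}}$, so that $w_{i,t}$ is the normalized weight appearing in the update rule, with $w_{i,1}=1/k$.

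For the lower bound, $\log(W_{T+1}/W_1)\ge -\eta\min_i\sum_t \ell_{i,t}-\log k$.

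For the upper bound, each ratio satisfies $\log(W_{t+1}/W_t)=\log \mathbb{E}_{i\sim w_t}e^{-\eta \ell_{i,t}}$. Since $\ell_{i,t}\in[0,B]$, Hoeffding's lemma gives
\[
\log(W_{t+1}/W_t)\le -\eta\sum_i w_{i,t}\ell_{i,t}+\eta^2B^2/8 .
\]

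\textbf{Step 2 (convexity of the loss).} The squared loss is convex in the prediction. By Jensen's inequality, $\|\hat{y}_t-y_t\|_2^2=\|\sum_i w_{i,t}\hat{y}_{i,t}-y_t\|_2^2\le\sum_i w_{i,t}\ell_{i,t}$, so the combined forecaster's loss is at most the weighted average loss of the experts.

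Chaining the two bounds of Step 1 with this inequality yields
\[
\sum_{t=1}^T\|\hat{y}_t-y_t\|_2^2\le \min_i\sum_t\ell_{i,t}+\frac{\log k}{\eta}+\frac{\eta TB^2}{8}.
\]
Choosing $\eta=\sqrt{8\log k/(TB^2)}$ makes the regret term $\sqrt{\tfrac12 TB^2\log k}$, which is exactly the constant in the statement.

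\textbf{Step 3 (expectation and Theorem \ref{theo:ema}).} Taking expectations and using $\mathbb{E}\min_i\sum_t\ell_{i,t}\le\min_i\mathbb{E}\sum_t\ell_{i,t}$, the right-hand side becomes the best expected expert loss plus the regret term. Theorem \ref{theo:ema} shows that, for large $T$, each $\mathbb{E}\sum_t\ell_{i,t}$ equals (approximately) the expression whose minimum over $i$ defines $L_i$. This gives the claimed bound.

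\textbf{Main obstacle.} I expect the delicate point to be Step 3 rather than the regret calculus.

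First, Theorem \ref{theo:ema} is itself only an asymptotic approximation. The inequality therefore holds up to the same large-$T$ approximation, and I would state that caveat explicitly.

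Second, the bounded-loss assumption $B$ must hold pathwise so that Hoeffding's lemma applies. The statement assumes the combined loss is bounded; I would read this as requiring that each expert loss is bounded by $B$, which is what Step 1 needs.

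Third, the statement writes the weight update with a scalar square. I would read it as the vector squared norm $\|\cdot\|_2^2$, matching Algorithm \ref{alg:main}, so that the per-day losses are exactly the $\ell_{i,t}$ used above.
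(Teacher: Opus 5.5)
Your proposal is correct and follows essentially the same route as the paper: a log-sum-of-weights potential, Hoeffding's lemma for each per-step increment, Jensen's inequality for the convex squared loss, a lower bound on the potential via the best expert, and the choice $\eta=\sqrt{8\log k/(TB^2)}$ to obtain $\sqrt{\tfrac12 TB^2\log k}$. Your Step 3 makes explicit what the paper's proof leaves implicit, since the paper stops at the pathwise regret bound: you use $\mathbb{E}\min_i\le\min_i\mathbb{E}$, substitute the large-$T$ expression from Theorem \ref{theo:ema}, and read the boundedness assumption as a bound on each expert's squared-norm loss.
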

\textbf{Remark:} The first term of the upper bound represents the error of the best expert, which balances the bias and variance terms using the most suitable $\alpha$. The second term $\sqrt{\frac{1}{2}TB^2logk}$ is the regret due to the mixture algorithm which grows sublinearly with $T$ and logarithmically with the number of experts $k$. $\eta$ can be tuned optimally if $T$ and $B$ is known in advance.

Besides, the first term grows linearly with $T$ because both $\frac{\sum_{t=1}^{T-1}\left|\left|g_t-g_{t+1}\right|\right|^2}{1-\alpha_i^2}$ and $\frac{(2+\alpha)\sum_{t=1}^{T-1}{{Tr(\Sigma}_t)}}{1+\alpha}$ are sums over $T-1$ terms, each term typically bounded, leading to linear growth in $T$. The second term grows only as the square root of $T$. As a result, for large $T$ (long-term deployment), the linear growth of the first term dominates the square root growth of the second term. Therefore, the mixture algorithm preferences almost the same as the best expert, and thus we nearly find the optimal $\alpha$ without prior knowledge. 

This explains why our method is robust - different datasets experience varying levels of traffic pattern changes during deployment. For datasets with minimal traffic pattern changes, the optimal $\alpha$ approaches 1, meaning we nearly make no adjustments. For datasets with significant traffic pattern changes, the optimal $\alpha$ approaches 0, requiring aggressive adjustments. Our mixture of experts strategy can automatically find the best smoothing coefficient $\alpha$, allowing adaptive decision-making between aggressive or conservative updates. Specifically, for datasets with almost no changes, the expert with $\alpha=1$ will dominate the weighting, making our final output nearly identical to the original model's predictions. Thus, the results remain comparable to directly using the original model. However, other online prediction methods require model updates during deployment using new data. These updates can cause overfitting. Therefore, on some datasets, existing online prediction methods perform even worse than simply deploying the original model without any updates.

\section{Experiments}
\subsection{Setup}
\subsubsection{Datasets}
To verify the broad effectiveness of our proposed method, we collected seven datasets. These include bike\footnote{https://citibikenyc.com/system-data} and taxi\footnote{https://www.nyc.gov/site/tlc/about/tlc-trip-record-data.page} usage data from New York City, bike\footnote{https://divvybikes.com/system-data} and taxi\footnote{https://data.cityofchicago.org/Transportation/Taxi-Trips-2013-2023-/wrvz-psew/about\_data} usage data from Chicago, bike usage data from the Bay Area\footnote{https://www.lyft.com/bikes/bay-wheels/system-data}, bike usage data from Toronto\footnote{https://open.toronto.ca/dataset/bike-share-toronto-ridership-data/}, and bike usage data from Boston\footnote{https://bluebikes.com/system-data}.  

For the taxi datasets, both New York and Chicago provide official zoning plans. We aggregated the taxi data based on these official divisions, calculating hourly pick-ups and drop-offs for each zone.  As for the bike datasets, the raw data consists of individual bike trip records, each including start time, end time, start location, and end location. We manually divided the five regions—New York, Chicago, Boston, Toronto, and the Bay Area—into grids. Then, we aggregated each bike trip record into hourly counts of bike pick-ups and drop-offs for each grid. 

\begin{figure}[!h]
    \centering
    \includegraphics[width=0.9\linewidth]{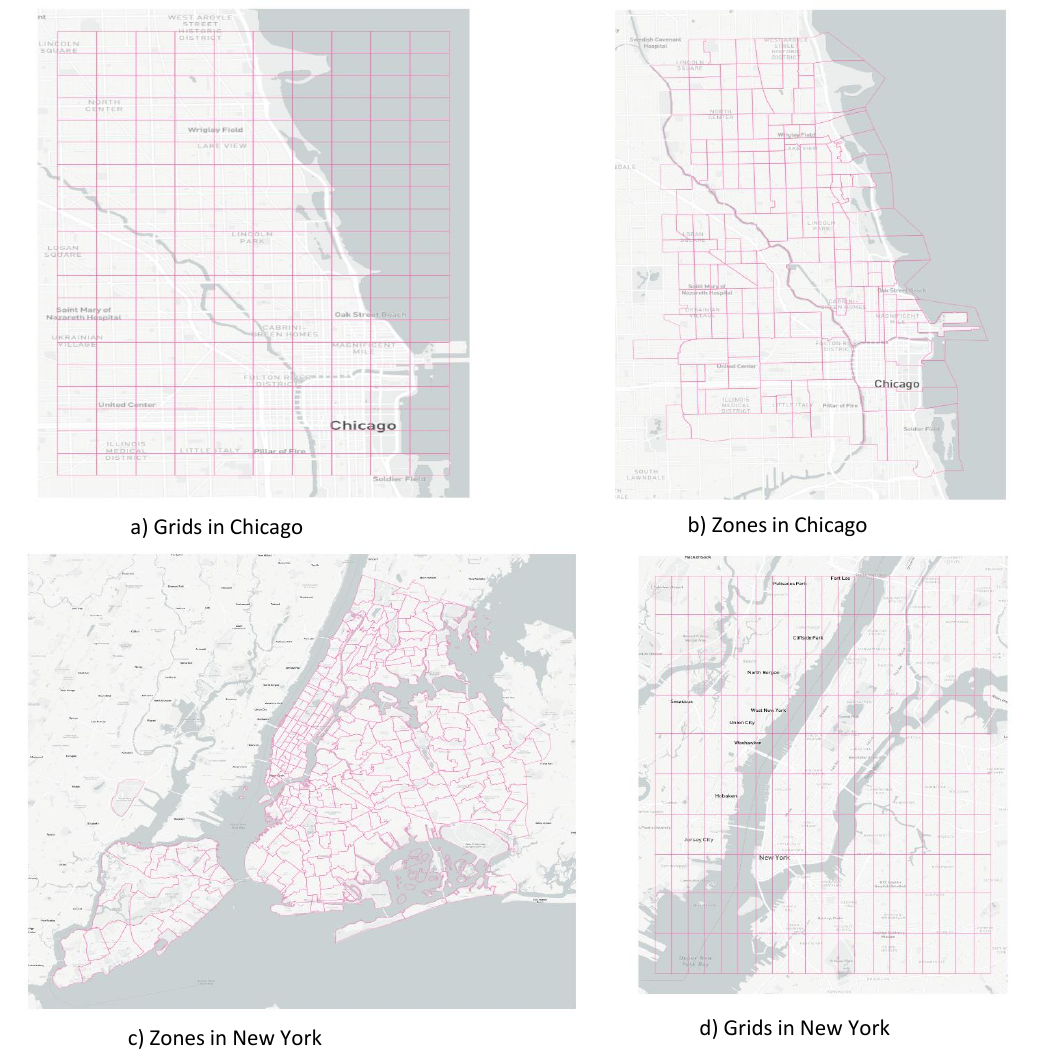}
    \caption{The zones in New York and Chicago}
    \label{fig:region}
\end{figure}

\begin{figure}[!h]
    \centering
    \includegraphics[width=0.9\linewidth]{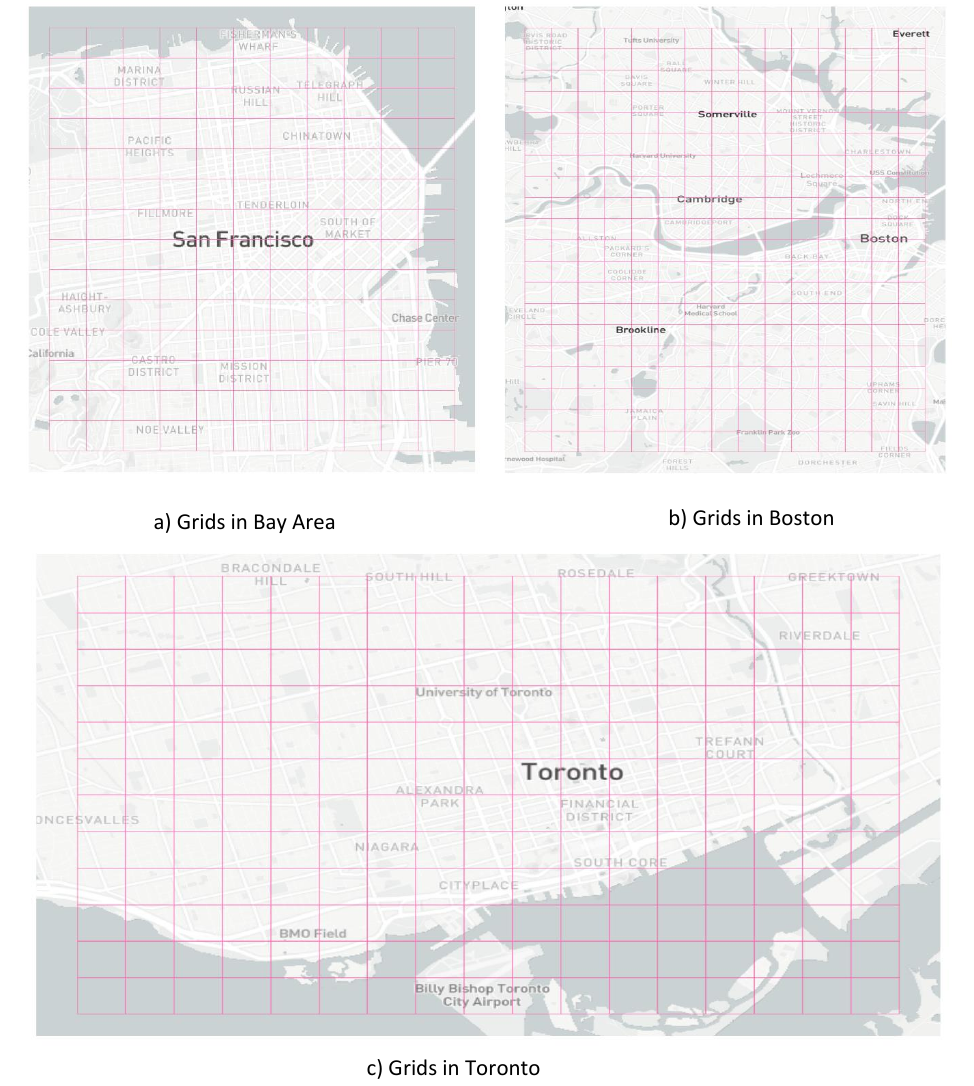}
    \caption{The grids in Bay Area, Boston and Toronto}
    \label{fig:grid}
\end{figure}

Additionally, we used data from January to November 2019 as the training set, data from December 2019 as the validation set, and deployed the model on data from January to June 2020. We chose this period because the COVID-19 pandemic occurred in 2020. It is evident that the data from January to June 2020 likely exhibits distribution shift compared to the training data. Therefore, online adaptation of the model is necessary. Finally we deleted the grids or zones with average taxi or bike usage less than 2 in the experiments.
\subsubsection{Prediction task and base models}
Our specific task is using the usage data of bikes or taxis over the past six hours to predict the usage for the next hour.

Regarding the prediction model, to demonstrate the applicability of our method across different model types, we selected three models: two traditional spatio-temporal neural network models—STGCN \citep{yu2018spatio} and GWNET \citep{wu2019graph}—and, considering the growing trend of applying large foundation models to traffic prediction, we also included a foundation model for spatio-temporal prediction called OpenCity \citep{li2024opencity}.

\subsubsection{Baselines}
For baseline methods, we found only one previously proposed method specifically designed for online adaptation in traffic prediction, namely ADCSD \citep{guo2025online}, so we included it in the comparisons. Additionally, several methods have been proposed for online adaptation of time series models, including OneNet \citep{wen2023onenet}, ELF \citep{leelightweight}, FSNet \citep{phamlearning}, DSOF \citep{lau2025fast}, and Proceed \citep{zhao2025proactive}. Moreover, two test time adaption methods for time series: TAFAS \citep{kim2025battling} and PETSA \citep{medeiros2025accurate} were also included as baselines. Finally, a basic baseline method—Online Gradient Descent (OGD) was also included.

It is worth noting that OneNet, FSNet, DSOF, and Proceed all involve retraining the model, while OGD requires back propagating gradients during online deployment to update all model parameters. These methods are computationally expensive. For traditional models, this computational cost may be acceptable, but for foundation models like OpenCity, which have a large number of parameters, updating the model online requires large GPU memory. Therefore, for OpenCity, we only compared ADCSD and ELF, which do not require gradient backpropagation or model retraining.
\subsubsection{Implemental details}
Our method involves several hyperparameters. We have summarized the hyperparameters used in our experiments in Table \ref{table:hyper}.
\begin{table}[!h]
\centering
\caption{Hyperparameters in our method}
\label{table:hyper}
\begin{tabular}{@{}cc@{}}
\toprule
Hyperparameters & Value  \\ \midrule
$\{\alpha_i\}$ & $\{0.7,0.8,0.9,1\}$   \\
  $\eta$ & 10 \\ 
 $\eta_k,\eta_\gamma$ &0.01\\ \bottomrule
\end{tabular}
\end{table}
We use MAE (mean absolute err) and RMSE (rooted mean square error) on test dataset to evaluate the performance of different online method. These two metrics are defined as:
\begin{equation}
    MAE=\frac{1}{n}\sum y_{pred_i}-y_{actual_i}|
\end{equation}
\begin{equation}
    RMSE=\sqrt{\frac{1}{n}\left(y_{pred_i-}y_{actual_i}\right)^2}
\end{equation}
Where $y_{pred_i}$ is the prediction value for the $i$-th sample, and the $y_{actual_i}$ is the actual value for the $i$-th sample.
\subsection{Main results}
We provide the results from seven datasets and three prediction models in Table \ref{table:result1} to Table \ref{table:result3}. The last rows in these tables show the percentage error reduction of our method compared to the the base model. The best result is colored in red and the second best result is colored in blue with an underline. 
\begin{table}[!h]
\centering
\caption{Results of NTCBIKE and NYCTAXI datasets}
\label{table:result1}
\fontsize{7}{6}\selectfont
\renewcommand{\arraystretch}{1.5}
\begin{tabular}{@{}c|cccccc|cccccc@{}}
\toprule
Dataset              & \multicolumn{6}{c|}{\textbf{NTCBIKE}}                                                                                                                                                                                                                                              & \multicolumn{6}{c}{\textbf{NYCTAXI}}                                                                                                                                                                                                                                                \\ \midrule
Model                & \multicolumn{2}{c}{STGCN}                                                                & \multicolumn{2}{c}{GWNET}                                                                 & \multicolumn{2}{c|}{Opencity}                                                               & \multicolumn{2}{c}{STGCN}                                                                 & \multicolumn{2}{c}{GWNET}                                                                  & \multicolumn{2}{c}{Opencity}                                                               \\ \hline
Metrics & RMSE                                        & MAE                                        & RMSE                                        & MAE                                         & RMSE                                         & MAE                                         & RMSE                                         & MAE                                        & RMSE                                         & MAE                                         & RMSE                                         & MAE                                         \\
Ori                  & 11.043                                      & 5.142                                      & 10.492                                      & 5.168                                       & 12.909                                       & 6.912                                       & 17.188                                       & 6.713                                      & 18.623                                       & 6.575                                       & 15.964                                       & 5.763                                       \\
FSNet                & 9.716                                       & 4.88                                       & {\color[HTML]{0070C0} {\ul \textbf{9.674}}} & {\color[HTML]{0070C0} {\ul \textbf{4.793}}} & -                                            & -                                           & 11.478                                       & 4.667                                      & 11.635                                       & 4.563                                       & -                                            & -                                           \\
Onenet               & 10.593                                      & 5.264                                      & -                                           & -                                           & -                                            & -                                           & 11.618                                       & 4.879                                      & -                                            & -                                           & -                                            & -                                           \\
ADCSD                & 9.439                                       & 4.843                                      & 9.995                                       & 5.126                                       & 12.732                                       & 6.862                                       & {\color[HTML]{0070C0} {\ul \textbf{10.756}}} & {\color[HTML]{0070C0} {\ul \textbf{4.31}}} & {\color[HTML]{0070C0} {\ul \textbf{11.138}}} & {\color[HTML]{0070C0} {\ul \textbf{4.323}}} & {\color[HTML]{0070C0} {\ul \textbf{15.386}}} & {\color[HTML]{0070C0} {\ul \textbf{5.563}}} \\
OGD                  & 9.83                                        & 4.942                                      & 9.93                                        & 5.012                                       & -                                            & -                                           & 11.148                                       & 4.855                                      & 11.041                                       & 4.452                                       & -                                            & -                                           \\
ELF                  & 10.792                                      & 5.745                                      & 11.212                                      & 5.896                                       & {\color[HTML]{0070C0} {\ul \textbf{12.696}}} & {\color[HTML]{0070C0} {\ul \textbf{6.832}}} & 13.636                                       & 5.923                                      & 13.697                                       & 5.456                                       & 15.937                                       & 5.667                                       \\
DSOF                 & {\color[HTML]{0070C0} {\ul \textbf{9.293}}} & {\color[HTML]{0070C0} {\ul \textbf{4.45}}} & 11.147                                      & 5.064                                       & -                                            & -                                           & 11.103                                       & 5.121                                      & 16.948                                       & 6.006                                       & -                                            & -                                           \\
Proceed              & 9.638                                       & 4.874                                      & 10.736                                      & 5.335                                       & -                                            & -                                           & 10.896                                       & 4.813                                      & 11.3                                         & 4.463                                       & -                                            & -                                           \\
PETSA                & 10.486                                      & 5.152                                      & 12.248                                      & 5.941                                       &     -                                         &              -                               & 13.719                                       & 4.981                                      & 14.598                                       & 5.105                                       &         -                                     &                     -                        \\
TAFAS                & 10.561                                      & 5.113                                      & 12.665                                      & 6.032                                       & -          & -          & 13.681                                       & 4.914                                      & 14.697                                       & \multicolumn{1}{l}{5.167}                   & -     &-      \\
Foresee              & {\color[HTML]{FF0000} \textbf{9.148}}       & {\color[HTML]{FF0000} \textbf{4.429}}      & {\color[HTML]{FF0000} \textbf{9.656}}       & {\color[HTML]{FF0000} \textbf{4.754}}       & {\color[HTML]{FF0000} \textbf{12.29}}        & {\color[HTML]{FF0000} \textbf{6.475}}       & {\color[HTML]{FF0000} \textbf{10.381}}       & {\color[HTML]{FF0000} \textbf{3.909}}      & {\color[HTML]{FF0000} \textbf{10.567}}       & {\color[HTML]{FF0000} \textbf{3.933}}       & {\color[HTML]{FF0000} \textbf{14.704}}       & {\color[HTML]{FF0000} \textbf{5.306}}       \\
improve              & 17.20\%                                     & 13.90\%                                    & 8.00\%                                      & 8.00\%                                      & 4.70\%                                       & 6.30\%                                      & 21.30\%                                      & 41.80\%                                    & 43.30\%                                      & 40.20\%                                     & 7.90\%                                       & 7.90\%                                      \\ \bottomrule
\end{tabular}
\end{table}
\begin{table}[!h]
\centering
\caption{Results of CHIBIKE and CHITAXI datasets}
\label{table:result2}
\fontsize{7}{6}\selectfont
\renewcommand{\arraystretch}{1.5}
\begin{tabular}{@{}c|cccccc|cccccc@{}}
\toprule
Dataset              & \multicolumn{6}{c|}{\textbf{CHIBIKE}}                                                                                                                                                                                                                                              & \multicolumn{6}{c}{\textbf{CHITAXI}}                                                                                                                                                                                                                                              \\ \midrule
Model                & \multicolumn{2}{c}{STGCN}                                                                 & \multicolumn{2}{c}{GWNET}                                                                 & \multicolumn{2}{c|}{Opencity}                                                              & \multicolumn{2}{c}{STGCN}                                                                 & \multicolumn{2}{c}{GWNET}                                                                 & \multicolumn{2}{c}{Opencity}                                                              \\
Metrics & RMSE                                        & MAE                                         & RMSE                                        & MAE                                         & RMSE                                        & MAE                                         & RMSE                                        & MAE                                         & RMSE                                        & MAE                                         & RMSE                                        & MAE                                         \\
Ori                  & 2.605                                       & 1.337                                       & 2.527                                       & {\color[HTML]{0070C0} {\ul \textbf{1.309}}} & 2.826                                       & 1.706                                       & 3.389                                       & 1.386                                       & 3.449                                       & 1.362                                       & 3.832                                       & 1.46                                        \\
FSNet                & 2.549                                       & 1.353                                       & 2.498                                       & 1.332                                       & -                                           & -                                           & 3.369                                       & 1.258                                       & 3.363                                       & 1.243                                       & -                                           & -                                           \\
Onenet               & 2.523                                       & 1.444                                       & -                                           & -                                           & -                                           & -                                           & 3.318                                       & 1.746                                       & -                                           & -                                           & -                                           & -                                           \\
ADCSD                & {\color[HTML]{0070C0} {\ul \textbf{2.451}}} & 1.383                                       & 2.383                                       & 1.349                                       & {\color[HTML]{0070C0} {\ul \textbf{2.801}}} & {\color[HTML]{0070C0} {\ul \textbf{1.689}}} & {\color[HTML]{0070C0} {\ul \textbf{3.297}}} & 1.205                                       & 3.283                                       & 1.168                                       & {\color[HTML]{0070C0} {\ul \textbf{3.743}}} & {\color[HTML]{0070C0} {\ul \textbf{1.414}}} \\
OGD                  & 2.528                                       & 1.402                                       & {\color[HTML]{0070C0} {\ul \textbf{2.376}}} & 1.326                                       & -                                           & -                                           & 3.277                                       & 1.249                                       & {\color[HTML]{0070C0} {\ul \textbf{3.234}}} & {\color[HTML]{0070C0} {\ul \textbf{1.183}}} & -                                           & -                                           \\
ELF                  & 2.746                                       & 1.551                                       & 2.786                                       & 1.563                                       & 2.965                                       & 1.763                                       & 4.009                                       & 1.538                                       & 4.053                                       & 1.539                                       & 4.333                                       & 1.527                                       \\
DSOF                 & 2.779                                       & 1.723                                       & 3.057                                       & 1.784                                       & -                                           & -                                           & 4.985                                       & 3.524                                       & 6.491                                       & 4.254                                       & -                                           & -                                           \\
Proceed              & 2.452                                       & 1.361                                       & 2.391                                       & 1.343                                       & -                                           & -                                           & 3.452                                       & 1.36                                        & 3.283                                       & 1.269                                       & -                                           & -                                           \\
PETSA                & 2.493                                       & 1.344                                       & 2.662                                       & 1.416                                       &    -                                         &           -                                  & 3.53                                        & 1.172                                       & 4.144                                       & 1.283                                       &         -                                    &                       -                      \\
TAFAS                & 2.492                                       & {\color[HTML]{0070C0} {\ul \textbf{1.332}}} & 2.496                                       & 1.316                                       & -         &-         & 3.565                                       & {\color[HTML]{0070C0} {\ul \textbf{1.174}}} & 4.116                                       & 1.278                                       & -         & -         \\
Foresee              & {\color[HTML]{FF0000} \textbf{2.435}}       & {\color[HTML]{FF0000} \textbf{1.298}}       & {\color[HTML]{FF0000} \textbf{2.336}}       & {\color[HTML]{FF0000} \textbf{1.268}}       & {\color[HTML]{FF0000} \textbf{2.793}}       & {\color[HTML]{FF0000} \textbf{1.663}}       & {\color[HTML]{FF0000} \textbf{3.23}}        & {\color[HTML]{FF0000} \textbf{1.194}}       & {\color[HTML]{FF0000} \textbf{3.216}}       & {\color[HTML]{FF0000} \textbf{1.161}}       & {\color[HTML]{FF0000} \textbf{3.693}}       & {\color[HTML]{FF0000} \textbf{1.413}}       \\
improve              & 6.50\%                                      & 3.00\%                                      & 7.50\%                                      & 3.20\%                                      & 1.10\%                                      & 2.50\%                                      & 4.70\%                                      & 13.80\%                                     & 6.80\%                                      & 14.80\%                                     & 3.60\%                                      & 3.20\%                                      \\ \bottomrule
\end{tabular}
\end{table}

\begin{table}[!h]
\centering
\caption{Results of BOSBIKE, NACBIKE and TORBIKE datasets}
\label{table:result3}
\fontsize{6.5}{6}\selectfont
\renewcommand{\arraystretch}{1.5}
\setlength{\tabcolsep}{1.3pt}
\begin{tabular}{@{}c|cccccc|cccccc|cccccc@{}}
\toprule
Dataset              & \multicolumn{6}{c|}{\textbf{BOSBIKE}}                                                                                                                                                                                                                                            & \multicolumn{6}{c|}{\textbf{BAYBIKE}}                                                                                                                                                                                                                                              & \multicolumn{6}{c}{\textbf{TORBIKE}}                                                                                                                                                                                                                                             \\ \hline
Model                & \multicolumn{2}{c}{STGCN}                                                                & \multicolumn{2}{c}{GWNET}                                                                 & \multicolumn{2}{c|}{Opencity}                                                             & \multicolumn{2}{c}{STGCN}                                                                 & \multicolumn{2}{c}{GWNET}                                                                 & \multicolumn{2}{c|}{Opencity}                                                              & \multicolumn{2}{c}{STGCN}                                                                 & \multicolumn{2}{c}{GWNET}                                                                 & \multicolumn{2}{c}{Opencity}                                                             \\ \hline
Metrics & RMSE                                       & MAE                                         & RMSE                                        & MAE                                         & RMSE                                        & MAE                                        & RMSE                                        & MAE                                         & RMSE                                        & RMSE                                        & MAE                                         & RMSE                                        & RMSE                                        & MAE                                         & RMSE                                        & RMSE                                        & MAE                                         & RMSE                                       \\
Ori                  & 2.077                                      & 1.154                                       & 2.033                                       & 1.135                                       & 2.244                                       & 1.384                                      & 3.22                                        & 1.552                                       & 3.09                                        & 1.506                                       & {\color[HTML]{0070C0} {\ul \textbf{2.865}}} & {\color[HTML]{0070C0} {\ul \textbf{1.648}}} & 2.206                                       & 1.182                                       & 2.146                                       & 1.171                                       & 2.355                                       & 1.436                                      \\
FSNet                & 2.024                                      & 1.158                                       & 2.001                                       & 1.131                                       & -                                           & -                                          & 2.913                                       & 1.503                                       & 2.862                                       & 1.493                                       & -                                           & -                                           & 2.128                                       & 1.189                                       & 2.123                                       & 1.163                                       & -                                           & -                                          \\
Onenet               & 2.028                                      & 1.220                                        & -                                      & -                                         & -                                           & -                                          & 2.796                                       & 1.557                                       & -                                           & -                                           & -                                           & -                                           & 2.114                                       & 1.253                                       & -                                           & -                                           &                                             &                                            \\
ADCSD                & {\color[HTML]{0070C0} {\ul \textbf{1.97}}} & 1.186                                       & 1.948                                       & 1.177                                       & {\color[HTML]{0070C0} {\ul \textbf{2.185}}} & {\color[HTML]{0070C0} {\ul \textbf{1.35}}} & 2.669                                       & 1.496                                       & 2.783                                       & 1.514                                       & 3.043                                       & 1.718                                       & {\color[HTML]{0070C0} {\ul \textbf{2.103}}} & 1.246                                       & 2.103                                       & 1.23                                        & {\color[HTML]{0070C0} {\ul \textbf{2.322}}} & {\color[HTML]{0070C0} {\ul \textbf{1.41}}} \\
OGD                  & 1.996                                      & 1.189                                       & {\color[HTML]{0070C0} {\ul \textbf{1.944}}} & 1.176                                       & -                                           & -                                          & 2.701                                       & 1.498                                       & 2.76                                        & 1.444                                       & -                                           & -                                           & {\color[HTML]{0070C0} {\ul \textbf{2.103}}} & 1.234                                       & {\color[HTML]{0070C0} {\ul \textbf{2.082}}} & 1.218                                       &         -                                    &                              -              \\
ELF                  & 2.184                                      & 1.288                                       & 2.128                                       & 1.269                                       & 2.365                                       & 1.442                                      & 3.365                                       & 1.7                                         & 3.292                                       & 1.673                                       & 3.078                                       & 1.737                                       & 2.273                                       & 1.314                                       & 2.231                                       & 1.31                                        & 2.46                                        & 1.492                                      \\
DSOF                 & 2.184                                      & 1.288                                       & 2.357                                       & 1.487                                       & -                                           & -                                          & 3.397                                       & 1.805                                       & 3.84                                        & 2.09                                        & -                                           & -                                           & 2.291                                       & 1.374                                       & 2.459                                       & 1.451                                       & -                                           & -                                          \\
Proceed              & 1.982                                      & 1.18                                        & 2.102                                       & 1.2401                                      & -                                           & -                                          & 2.727                                       & 1.458                                       & 4.096                                       & 1.93                                        & -                                           & -                                           & 2.109                                       & 1.234                                       & 2.377                                       & 1.383                                       &                -                             &        -                                    \\
PETSA                & 1.996                                      & 1.143                                       & 1.986                                       & 1.122                                       &         -                                    &             -                               & {\color[HTML]{FF0000} \textbf{2.634}}       & {\color[HTML]{FF0000} \textbf{1.426}}       & {\color[HTML]{FF0000} \textbf{2.586}}       & {\color[HTML]{FF0000} \textbf{1.385}}       &           -                                  &                                 -            & 2.104                                       & 1.163                                       & 2.104                                       & 1.166                                       &              -                               &                                     -       \\
TAFAS                & 2.01                                       & {\color[HTML]{0070C0} {\ul \textbf{1.138}}} & 1.981                                       & {\color[HTML]{0070C0} {\ul \textbf{1.117}}} & -           & -     & 2.685                                       & 1.447                                       & 2.800                                         & 1.482                                       & -    & -   & 2.115                                       & {\color[HTML]{0070C0} {\ul \textbf{1.162}}} & 2.119                                       & {\color[HTML]{0070C0} {\ul \textbf{1.158}}} & -      &-       \\
Foresee              & {\color[HTML]{FF0000} \textbf{1.942}}      & {\color[HTML]{FF0000} \textbf{1.126}}       & {\color[HTML]{FF0000} \textbf{1.938}}       & {\color[HTML]{FF0000} \textbf{1.116}}       & {\color[HTML]{FF0000} \textbf{2.196}}       & {\color[HTML]{FF0000} \textbf{1.348}}      & {\color[HTML]{0070C0} {\ul \textbf{2.679}}} & {\color[HTML]{0070C0} {\ul \textbf{1.434}}} & {\color[HTML]{0070C0} {\ul \textbf{2.702}}} & {\color[HTML]{0070C0} {\ul \textbf{1.435}}} & {\color[HTML]{FF0000} \textbf{2.818}}       & {\color[HTML]{FF0000} \textbf{1.603}}       & {\color[HTML]{FF0000} \textbf{2.029}}       & {\color[HTML]{FF0000} \textbf{1.142}}       & {\color[HTML]{FF0000} \textbf{2.078}}       & {\color[HTML]{FF0000} \textbf{1.157}}       & {\color[HTML]{FF0000} \textbf{2.305}}       & {\color[HTML]{FF0000} \textbf{1.396}}      \\
improve              & 6.50\%                                     & 2.40\%                                      & 4.60\%                                      & 1.50\%                                      & 2.10\%                                      & 2.60\%                                     & 16.70\%                                     & 7.50\%                                      & 12.50\%                                     & 4.60\%                                      & 1.60\%                                      & 2.70\%                                      & 8.00\%                                      & 3.40\%                                      & 3.20\%                                      & 1.20\%                                      & 2.10\%                                      & 2.70\%                              \\ \bottomrule       
\end{tabular}
\end{table}

We can conclude our method as accurate and robust. For accuracy, it is clear that our model achieves the best prediction accuracy on most datasets and prediction models. Compared to the original method without online fine-tuning, our approach shows significant improvement, and the extent varies by dataset. For example, on the New York taxi dataset, accuracy improves by nearly 40\%, while on the New York bike dataset, it improves by about 10\%. On the Boston bike dataset, accuracy increases by approximately 5\%. This variation is understandable because traffic patterns change differently across datasets—some show more noticeable shifts, while others do not. Therefore, the improvement differs depending on the situation.

As for robustness, our method consistently improves accuracy compared to the original model in all cases. In contrast, other online prediction methods sometimes even lead to performance degradation. This happens because in scenarios with little distribution change, forcing online adaptation may cause overfitting, resulting in worse performance than without online adaptation. However, our method uses a multi-expert ensemble approach. One expert has a smoothing coefficient of one, meaning it acts as the original model without online adaptation. If there is indeed little distribution drift, the weight of this expert gradually increases. As a result, our method converges to the original model, ensuring that even when the distribution drift is minimal, the performance of our method remains robust. 

We plot several graphs showing the true values and predicted values in Figure \ref{fig:Visualization}

\begin{figure}[!h]
    \centering
    \includegraphics[width=0.9\linewidth]{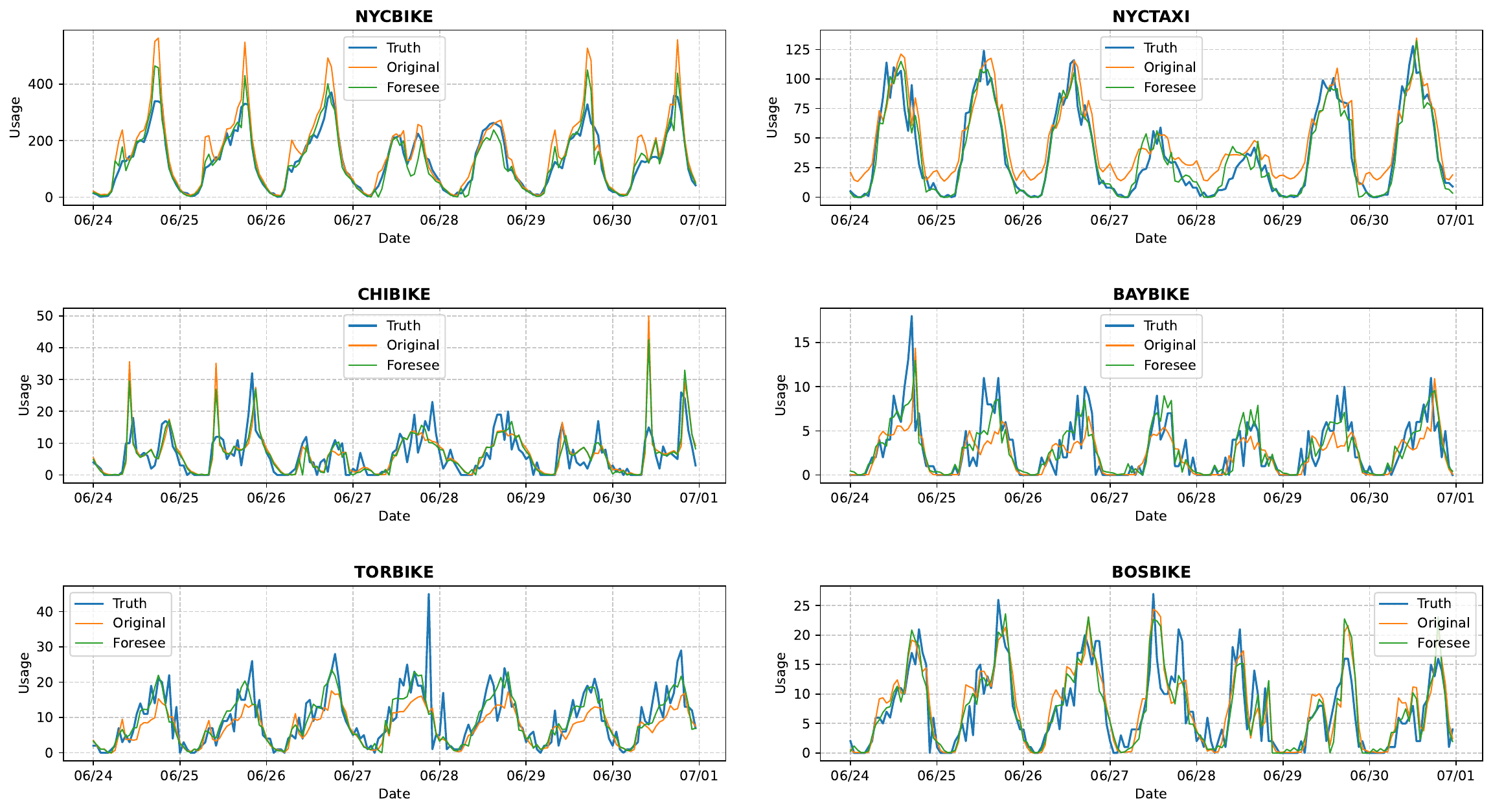}
    \caption{Visualization of our method}
    \label{fig:Visualization}
\end{figure}
\subsection{Method analysis}
\subsubsection{Analysis of $\alpha$}
In our experiments, one very important hyperparameter is $\alpha$, which represents the smoothing coefficient for errors across different days. In the main experimental section, we used a mixture of experts to approximate the optimal $\alpha$. Therefore, in this subsection, we first aim to conduct a sensitivity analysis on this $\alpha$ parameter.

Using the STGCN model as an example, we carried out additional experiments on four datasets: bike and taxi data from New York and Chicago. Specifically, instead of employing the mixture of experts approach, we tested $\alpha$ values from 0 to 1 in increments of 0.05. For each $\alpha$ value, we observed the resulting MAE after online deployment. The results are summarized in Figure \ref{fig:ema} below.

\begin{figure}[!h]
    \centering
    \includegraphics[width=0.9\linewidth]{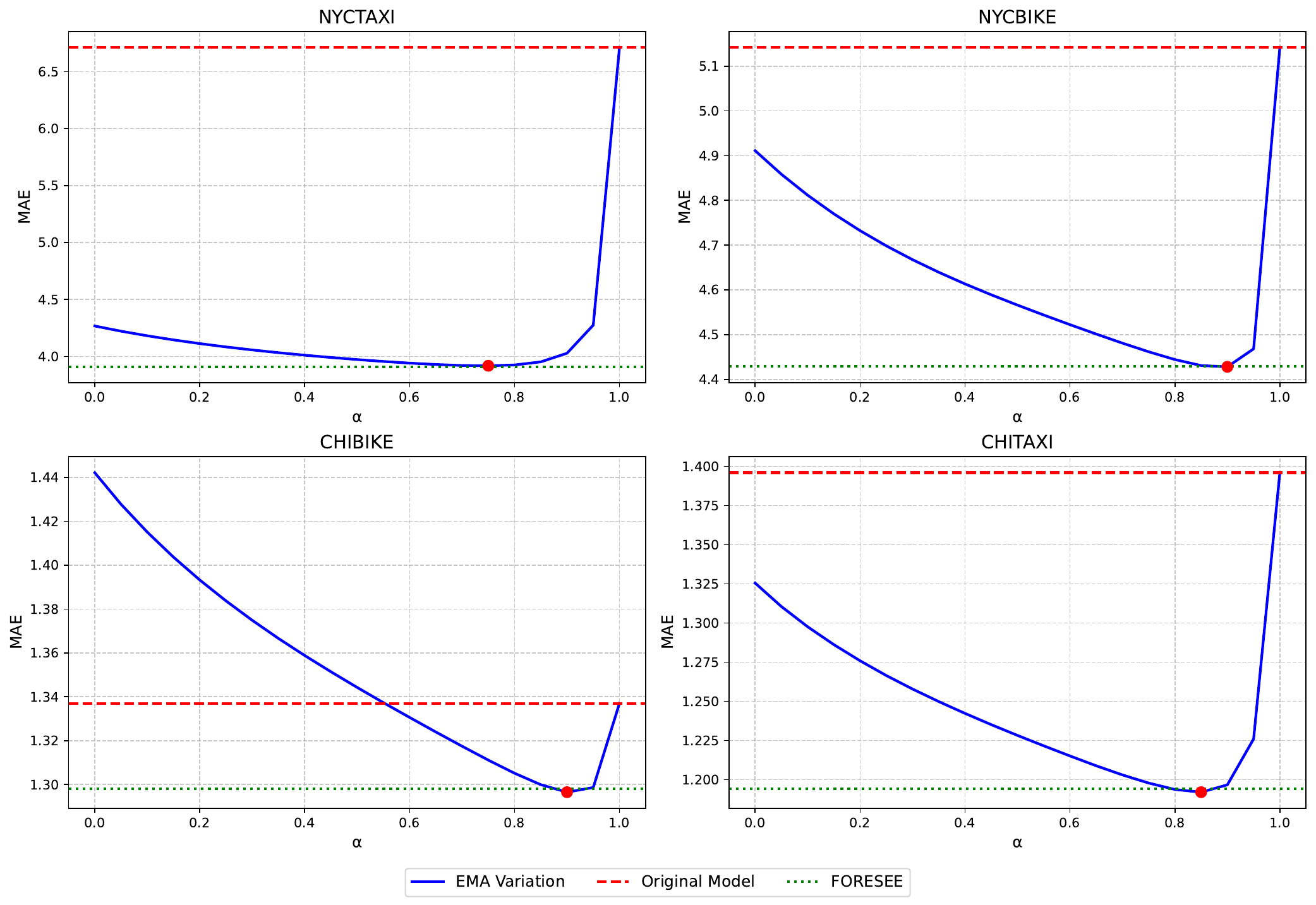}
    \caption{Results of using different $\alpha$}
    \label{fig:ema}
\end{figure}

From Figure \ref{fig:ema}, it can be observed that using different smoothing factor $\alpha$ leads to significant different MAE. Besides, the optimal $\alpha$ value is different for each dataset. It should also be emphasized that when $\alpha$ is set to zero, it essentially means using the previous day's error directly to adjust the next day's prediction. This simple approach actually performs quite poorly. Particularly noticeable is that on the Chicago bike dataset, this method yields even worse results than making no corrections at all.

However, when $\alpha$ is tuned to an appropriate value, much better results can be achieved. The red dots in the figure indicate the optimal $\alpha$ values. The green line represents the final results provided by our method. It can be observed that our method's performance is very close to that achieved with the optimal $\alpha$.

Moreover, Figure \ref{fig:ema} reveals a clear U-shaped trend in the relationship between MAE and $\alpha$. This pattern aligns well with our theoretical analysis in Section 4. Specifically, a very small $\alpha$ (e.g., $\alpha$=0) makes the correction term overly sensitive to daily noise, leading to unstable and inaccurate adjustments. Conversely, a very large $\alpha$ (close to 1) results in excessive smoothing, causing the method to react too slowly to genuine distribution shifts. Therefore, an optimal $\alpha$ exists that balances responsiveness to real changes and robustness against random fluctuations. The observed U-shaped curves across all datasets empirically validate this trade-off and further justify our design choice of using a mixture-of-experts mechanism to adaptively approximate the optimal $\alpha$ in practice.
\subsubsection{Analysis of spatial temporal smooth}

To evaluate the necessity and contribution of our method's spatiotemporal smoothing component, we conducted ablation studies We used the STGCN and GWNET models and conducted ablation experiments on all datasets. The results are shown in Table \ref{table:ab} below. The table compares three approaches: The original model without any online adaptation, Foresee method, Foresee method without the spatiotemporal smoothing module (Foresee-)

\begin{table}[!h]
\centering
\caption{Results of ablation experiments}
\label{table:ab}
\fontsize{7}{6}\selectfont
\renewcommand{\arraystretch}{1.5}\textbf{}
\begin{tabular}{@{}c|cccccc|cccccc@{}}
\toprule
Model   & \multicolumn{6}{c|}{STGCN}                                                                                       & \multicolumn{6}{c}{GWNET}                                                                                  \\ \midrule
Method  & \multicolumn{2}{c}{Ori} & \multicolumn{2}{c}{Foresee} & \multicolumn{2}{c|}{Foresee-}                            & \multicolumn{2}{c}{Ori} & \multicolumn{2}{c}{Foresee} & \multicolumn{2}{c}{Foresee-}                       \\
Dataset & RMSE        & MAE       & RMSE          & MAE         & RMSE                       & MAE                        & RMSE        & MAE       & RMSE          & MAE         & \multicolumn{1}{c}{RMSE} & \multicolumn{1}{c}{MAE} \\
NYCBIKE & 11.0430     & 5.1420    & 9.1480        & 4.4290      & 9.1960                     & 4.4460                     & 10.4920     & 5.1680    & 9.6560        & 4.7540      & 9.6113                   & 4.7672                  \\
NYCTAXI & 13.1880     & 6.7130    & 10.3810       & 3.9090      & 10.4440                    & 3.9900                     & 18.6230     & 6.5750    & 10.5670       & 3.9330      & 10.6139                  & 3.9848                  \\
CHIBIKE & 2.6050      & 1.3370    & 2.4350        & 1.2980      & 2.5100                     & 1.3120                     & 2.5270      & 1.3090    & 2.3360        & 1.2680      & 2.4128                   & 1.2809                  \\
CHITAXI & 3.3890      & 1.3860    & 3.2300        & 1.1940      & 3.2460                     & 1.2070                     & 3.4490      & 1.3620    & 3.2160        & 1.1610      & 3.2068                   & 1.1661                  \\
BOSBIKE & 2.0770      & 1.1540    & 1.9420        & 1.1260      & 2.0754                     & 1.1476 & 2.0330      & 1.1350    & 1.9380        & 1.1160      & 2.1188                   & 1.1646                  \\
BAYBIKE & 3.2200      & 1.5520    & 2.6790        & 1.4340      & 2.7095 &1.4636 & 3.0900      & 1.5060    & 2.7020        & 1.4350      & 2.7342                   & 1.4569                  \\
ROEBIKE & 2.2060      & 1.1820    & 2.0290        & 1.1420      & 2.0514 & 1.1664 & 2.1460      & 1.1710    & 2.0780        & 1.1570      & 2.0900                   & 1.1601                  \\ \bottomrule
\end{tabular}
\end{table}



The results show that removing the spatiotemporal smoothing module increases test error. For example, when using STGCN on the NYC bike dataset, RMSE increased from 9.148 to 9.196, and MAE rose from 4.429 to 4.446. Besides, it's worth noting that even without the smoothing module, our method still significantly outperforms the original model.
\subsubsection{Computational cost comparison}

Ideally, an online adaptation algorithm should adjust a given model with minimal cost to better fit the test data. Theoretically, our method does not require gradient computation on the model's parameters, whereas many baseline methods do. This means our approach should use less GPU memory and require less computation. To verify this, we compared the GPU memory usage and the total runtime (in RTX 4090D and i5-12400F) between our method and all baseline methods using the GWNET model on bike and taxi datasets from New York. The results are shown in Figure \ref{fig:eff}. 
\begin{figure}[!h]
    \centering
    \includegraphics[width=0.9\linewidth]{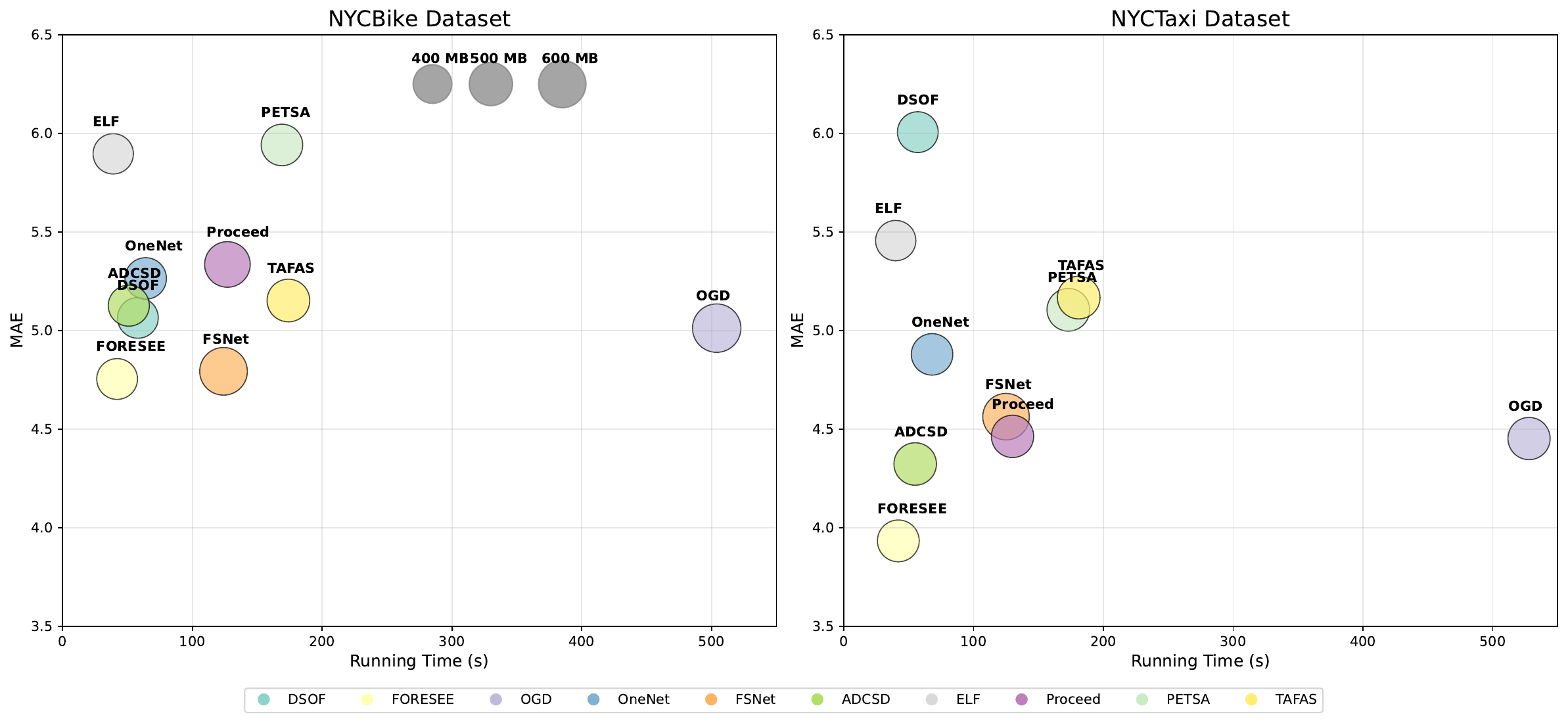}
    \caption{Model efficiency comparison under NYCBIKE and NYCTAXI dataset}
    \label{fig:eff}
\end{figure}

From Figure \ref{fig:eff}, we can see that the running time of our method is shorter than most baseline methods. For example, on the bike dataset, our running time is less than 50 seconds, while some methods require over two minutes. The reason is that our method requires less computation. Some baseline methods need to backpropagate gradients to update the entire model parameters, which demands much more computation. But our method does not require this.

In terms of GPU memory usage, the difference is not very obvious. However, it can still be observed that our method uses slightly less memory than methods like FSNet and OGD. It is worth noting that since this prediction model (GWNET) is a traditional deep learning approach, the model itself does not require much memory—the maximum usage is only around 600MB to 700MB. Therefore, the difference in memory consumption is not significant. However, if a very large model were used, methods that require updating all model parameters or back propagating gradients through deep networks would theoretically need much more GPU memory. In contrast, our method does not require gradient backpropagation at the prediction model level, so it should theoretically use significantly less memory than baseline methods.
\section{Conclusion}
In summary, this paper proposes an accurate, robust, and efficient online deployment method for traffic prediction models. \textbf{Accuracy} is reflected in our method's ability to significantly improve the performance of the original model during online deployment, enabling it to adapt to evolving traffic patterns. Compared to existing online time-series prediction methods, our approach is highly competitive, achieving the best prediction accuracy in most cases. \textbf{Robustness} is demonstrated by the fact that even on datasets with minimal changes in traffic patterns—where some existing online methods fail due to overfitting and perform worse than the original model—our method adaptively adjusts the weights of experts to achieve the accuracy that is at least on par with the original model. \textbf{Efficiency} is achieved as our approach requires no gradient computation or parameter updates to the original model. This results in lower GPU memory usage, making it suitable for use with foundation models. Furthermore, our method requires significantly less computation time than some existing approaches.

In conclusion, this work provides a practical and efficient solution for the real-world application of traffic prediction models. It paves the way for more adaptive and scalable intelligent transportation systems by enabling models to continuously learn and improve online with minimal computational overhead.

\bibliographystyle{elsarticle-harv} 
\bibliography{ref}
\newpage
\appendix
\section{More implemental details}
\subsection{Base prediction model}
For the two traditional models, SDGCN and GWNET, we used the official code from an open-source spatiotemporal forecasting project library \citep{10.1145/3474717.3483923}, along with their default hyperparameters. We trained these models on the training set for 100 epochs using the Adam optimizer with an initial learning rate of 0.005. Training was stopped early if the validation error did not decrease for ten consecutive epochs.

As for the base model OpenCity, we used the official pre-trained weights provided by the authors \footnote{https://github.com/HKUDS/OpenCity/tree/main}. However, when deploying it on each dataset, we found that directly using the pre-trained model yielded relatively poor performance. Therefore, we performed additional fine-tuning on the last layer of the pre-trained model for 10 epochs with a learning rate of 0.001.
\subsection{Baselines}

A few baseline methods require additional explanation for their implementation. Because they were originally designed for time series forecasting with specific model architectures, we needed to adapt them to fit our current problem.

First is the OneNet \citep{wen2023onenet} method. It is an online forecasting approach for time series. Its core idea is to calibrate two models on the training set: the first model predicts based solely on historical data of each dimension, and the second predicts based solely on relationships between different variables. During deployment, it continuously adjusts the weights between these two models based on their actual performance to achieve an adaptive forecasting effect. In our implementation, we similarly calibrated two models: a pure LSTM model (capturing only historical information) and a GNN model (capturing only inter-regional relationships). We then used the adaptive weighting strategy proposed in the original OneNet paper for online prediction. It's important to note that the OneNet method is not model-agnostic; its result is tied to the specific model pair used. Therefore, for each dataset, this method produces only one result. In contrast, other model-agnostic methods yield three results per dataset (one for each of our three base forecasting models: STGCN, GWNET, etc.). For clarity in our results table, we report OneNet's result in the column corresponding to the STGCN model, leaving the entries for the other two base models blank.

Next is the FSNet method. It was originally proposed specifically for models with Temporal Convolutional Networks (TCN). In our implementation, we adapted it to work with STGCN and GWNET. Specifically, FSNet dynamically adjusts the parameters of the convolutional layers within a model and the features extracted by these layers. Accordingly, in our practice, we followed the paper's methodology to dynamically adjust the parameters of the temporal convolutional layers and their corresponding output features in these two spatiotemporal forecasting models.
\section{Proofs}
Proof of Theorem \ref{Tho:ori}
\begin{proof}
\begin{equation}
    \mathbb{E}\sum_{t=1}^{T}\left|\left|\hat{y}_t-y_t\right|\right|_2^2=\mathbb{E}\sum_{t=1}^{T}\left|\left|f(x_t)-f(x_t)+g_t+\epsilon_t\right|\right|_2^2
\end{equation}
    
Because $E\epsilon_t=0$ and independent of $g_t$, Then:
\begin{equation}
    \sum_{t=1}^{T}\left|\left|f(x_t)-f(x_t)+g_t+\epsilon_t\right|\right|_2^2=\sum_{t=1}^{T}\left|\left|g_t\right|\right|_2^2+\sum_{t=1}^{T}{Tr(\Sigma_t)}
\end{equation}

\end{proof}
Proof of Theorem \ref{the:pro}
\begin{proof}
For any $t$:
    \begin{align}
        \mathbb{E}\left|\left|\hat{y}_t-y_t\right|\right|_2^2&=\mathbb{E}\left|\left|f\left(x_t\right)+\left(y_{t-1}-f\left(x_{t-1}\right)\right)-f\left(x_t\right)-g_t-\epsilon_t\right|\right|_2^2 \\&=\mathbb{E}\left|\left|f\left(x_t\right)+\left(f\left(x_{t-1}\right)+g_{t-1}+\epsilon_{t-1}-f\left(x_{t-1}\right)\right)-f\left(x_t\right)-g_t-\epsilon_t\right|\right|_2^2
\\&=\mathbb{E}\left|\left|g_{t-1}-g_t+\epsilon_t+\epsilon_{t-1}\right|\right|_2^2=\mathbb{E}\left|\left|v_t\right|\right|_2^2+2Tr(\Sigma_t)
    \end{align}
    This proof can be completed by conducting summation form $t=1$ to $T-1 $for the above equation.
\end{proof}
Proof of Theorem \ref{theo:ema}
\begin{proof}
    Let us define:
    \begin{equation}
        e_t=\delta_t-g_t
    \end{equation}
    As a result:
    \begin{equation}
        \mathbb{E}||\hat{y}_t-y_t||^2_2=\mathbb{E}e_t^2+Tr\left(\Sigma_t\right)
    \end{equation}
    Then the error in test set is:
    \begin{equation}
        \mathbb{E}\sum_{t=1}^{T}||\hat{y}_t-y_t||^2_2=\mathbb{E}\sum_{t=1}^{T}e_t^2+\sum_{t=1}^{T}{{Tr(\Sigma}_t)}
    \end{equation}
    Because we use EMA to find $\delta_{t+1}$ as:
    \begin{equation}
        \delta_{t+1}=\alpha\delta_t+\left(1-\alpha\right)\left(y_t-f\left(x_t\right)\right)
    \end{equation}
    Then we have:
    \begin{equation}
        e_{t+1}+g_{t+1}=\alpha\left(e_t+g_t\right)+(1-\alpha)(g_t+\epsilon_t)
    \end{equation}
    As a result:
    \begin{equation}
        e_{t+1}=\alpha e_t+\left(g_t-g_{t+1}\right)+\left(1-\alpha\right)\epsilon_t
        \label{eq:et+1}
    \end{equation}
    Then:
    \begin{equation}
        \mathbb{E}e_{t+1}^2=\mathbb{E}\left(\alpha^2e_t^2+\left|\left|g_t-g_{t+1}\right|\right|^2+\left(1-\alpha\right)\epsilon_t^2+2\alpha\left(g_t-g_{t+1}\right)e_t\epsilon_t\right)
    \end{equation}
    Conducting summation form $t=1$ to $T-1$ we have:
    \begin{equation}
        \mathbb{E}\sum_{t=1}^{T-1}e_{t+1}^2=\mathbb{E}\left(\sum_{t=1}^{T-1}{\alpha^2e_t^2}+\sum_{t=1}^{T-1}\left|\left|g_t-g_{t+1}\right|\right|^2+\sum_{t=1}^{T-1}{\left(1-\alpha\right)\epsilon_t^2}+\sum_{t=1}^{T-1}{2\alpha\left(g_t-g_{t+1}\right)e_t\epsilon_t}\right)
    \end{equation}
    Then for large $T$ we omit the first the last term $e_1^2$ and $e_T^2$.
    \begin{equation}
        \mathbb{E}\left(1-\alpha^2\right)\sum_{t=1}^{T}e_t^2=\sum_{t=1}^{T-1}{\left|\left|g_t-g_{t+1}\right|\right|^2+\sum_{t=1}^{T-1}{\left(1-\alpha\right){Tr(\Sigma}_t)}+\mathbb{E}\sum_{t=1}^{T-1}{2\alpha\left(g_t-g_{t+1}\right)e_t\epsilon_t}}
    \end{equation}
    Then we need to tackle the last term $\sum_{t=1}^{T-1}{2\alpha\left(g_t-g_{t+1}\right)e_t\epsilon_t}$. Recalling that $\delta_{t+1}=\alpha\delta_t+\left(1-\alpha\right)\left(y_t-f\left(x_t\right)\right)$ and Equation \ref{eq:et+1}, we can get:
    \begin{equation}
        e_{t+1}=-\sum_{i=1}^{t}\alpha^i\left(g_{t+1-i}-g_{t-i}\right)+(1-\alpha)\sum_{i=1}^{t}{\alpha^i\epsilon_{t-i}}
    \end{equation}
    Then
   \begin{align}
       \mathbb{E}2\alpha\left(g_t-g_{t+1}\right)e_t\epsilon_t&=2\alpha<-\sum_{i=1}^{t-1}\alpha^i\mathbb{E}\left(g_{t+1-i}-g_{t-i}\right),\left(g_t-g_{t+1}\right)>\\&+2\alpha \mathbb{E}<\left(1-\alpha\right)\sum_{i=1}^{t}{\alpha^i\epsilon_{t-i}},\left(g_t-g_{t+1}\right)>
   \end{align}
   Both terms are 0 because $g_{t+1}-g_t=v_t$ and $v_t$ is mean zero and independent. 
   As a result:
   \begin{equation}
       \mathbb{E}e_{t+1}^2=\frac{\sum_{t=1}^{T-1}\left|\left|g_t-g_{t+1}\right|\right|^2}{1-\alpha^2}+\frac{\sum_{t=1}^{T-1}{{Tr(\Sigma}_t)}}{1+\alpha}
   \end{equation}
   and 
   \begin{equation}
       \mathbb{E}\sum_{t=1}^{T}\left(pred_t-y_t\right)^2=\frac{\sum_{t=1}^{T-1}\left|\left|v_{t+1}\right|\right|^2}{1-\alpha^2}+\frac{(2+\alpha)\sum_{t=1}^{T-1}{{Tr(\Sigma}_t)}}{1+\alpha}
   \end{equation}
\end{proof}
Proof of Theorem \ref{theo:emas}
\begin{proof}
If $l(y,\hat{y})$ is defined as $l(y,\hat{y})=||y-\hat{y}||_2^2$ and
    let $u_{i,t}=u_{i,t}e^{-\eta l(y_{i,t},{\hat{y}}_{i,t})}$ and $u_{i,1}=1$.
    We first prove for any t:
    \begin{equation}
        w_{i,t}=\frac{u_{i,t}}{\sum_{j=1}^{k}u_{j,t}}
        \label{eq:w2u}
    \end{equation}
    It is easy to see that this equation is true in $t=1$.
    
Suppose in time $t$ this equation is true, then in $t+1$
\begin{align}
    w_{i,t+1}=&\frac{w_{i,t}\exp(-\eta\left({\hat{y}}_{i,t}-y_t\right)^2)}{\sum_{j}{w_{j,t}\exp(-\eta}\left({\hat{y}}_{j,t}-y_t\right)^2)}=\frac{\frac{u_{i,t}}{\sum_{j=1}^{k}u_{j,t}}\exp(-\eta\left({\hat{y}}_{i,t}-y_t\right)^2)}{\sum_{j}{\frac{u_{j,t}}{\sum_{l=1}^{k}u_{l,t}}\exp(-\eta}\left({\hat{y}}_{j,t}-y_t\right)^2)}\\&=\frac{u_{i,t}\exp(-\eta\left({\hat{y}}_{i,t}-y_t\right)^2)}{\sum_{j}{u_{j,t}\exp(-\eta}\left({\hat{y}}_{j,t}-y_t\right)^2)}=\frac{u_{i,t+1}}{\sum_{j=1}^{k}u_{j,t+1}}
\end{align}
Then we can conclude in any $t$, Equation \ref{eq:w2u} is true.
Then we define:
\begin{equation}
    \Phi_t=\log(\sum_{i=1}^{k}u_{i,t})
\end{equation}
Then:
\begin{equation}
    \Phi_{t+1}-\Phi_t=\log{\left(\frac{\sum_{i=1}^{k}{v_{i,t}e^{-\eta l\left(y_{i,t},{\hat{y}}_{i,t}\right)}}}{\sum_{i=1}^{k}v_{i,t}}\right)}=\log(\sum_{i=1}^{k}{e^{-\eta X}\frac{u_{i,t}}{\sum_{j=1}^{k}u_{j,t}}})=\log(\mathbb{E}_{p_t}e^{-\eta X})
\end{equation}
Where $X=l\left(y_{i,t},{\hat{y}}_{i,t}\right)$. Then $X$ is in $[0,B]$ and $p_t$ is the distribution of experts with $p_t\left(i\right)=\frac{u_{i,t}}{\sum_{j=1}^{k}u_{j,t}}$. As a result:
\begin{equation}
    \mathbb{E}_{p_t}e^{-\eta X}=\mathbb{E}_{p_t}e^{-\eta(X-\mathbb{E}_{p_t}X)}e^{\eta\mathbb{E}_{p_t}X}
\end{equation}
Then
\begin{equation}
    \Phi_{t+1}-\Phi_t=log\mathbb{E}_{p_t}e^{-\eta\left(X-\mathbb{E}_{p_t}X\right)}+\eta\mathbb{E}_{p_t}X
\end{equation}
By Hoeffding’s lemma (lemma 2.2 in \citep{Concentration2013}): for any random variable $X$ bounded by $[0,B]$
\begin{equation}
    \mathbb{E}_{p_t}\left[e^{\eta(X-\mathbb{E}_{p_t}X)}\right]\le e^\frac{B^2\eta^2}{8}
\end{equation}
Then:
\begin{equation}
    \Phi_{t+1}-\Phi_t\le-\eta\mathbb{E}_{p_t}X+\frac{B^2\eta^2}{8}
\end{equation}
Because $l(y_{i,t},{\hat{y}}_{i,t})$ is convex of  ${\hat{y}}_{i,t}$ and we use Jeson’s inequality:
\begin{equation}
    \eta\mathbb{E}_{p_t}X=\eta\mathbb{E}_{p_t}l\left(y_{i,t},{\hat{y}}_{i,t}\right)\geq\eta l\left(\mathbb{E}_{p_t}\hat{y}_{i,t},y_{i,t}\right)=\eta l\left({\hat{y}}_t,y_{i,t}\ \right)
\end{equation}
Then:
\begin{equation}
    \Phi_{t+1}-\Phi_t\le-\eta l\left({\hat{y}}_t,y_{i,t}\ \right)+\frac{B^2\eta^2}{8}
\end{equation}
Take summation of the above inequality over 1 to $T$:
\begin{equation}
    \sum_{t=1}^{T}{\Phi_{t+1}-\Phi_t}\le-\eta\sum_{t=1}^{T}l\left({\hat{y}}_t,y_{i,t}\ \right)+\frac{{TB}^2\eta^2}{8}
\end{equation}
Then
\begin{equation}
    \Phi_{T+1}-\Phi_1\le-\eta\sum_{t=1}^{T}l\left({\hat{y}}_t,y_{i,t}\ \right)+\frac{{TB}^2\eta^2}{8}
\end{equation}
Because $\Phi_1=logk$, as a result,$ \Phi_{T+1}\le-\eta\sum_{t=1}^{T}l\left({\hat{y}}_t,y_{i,t}\ \right)+\frac{{TB}^2\eta^2}{8}+logk$. Besides, $\Phi_{T+1}=\log{\left(\sum_{i=1}^{k}v_{i,T+1}\right)}$ and $u_{i,T+1}=e^{-\eta\sum_{t=1}^{T}l\left({\hat{y}}_{i,t},y_{i,t}\right)}$. Let we define $l_{i,T}=\sum_{t=1}^{T}l\left({\hat{y}}_{i,t},y_{i,t}\right)$ be the cumulative loss of expert $i$ up to time $t$, Then
\begin{equation}
    \Phi_{T+1}=\log{\left(\sum_{i=1}^{k}e^{-\eta l_{i,T}}\right)}\geq\log(e^{-\eta\min_i{l_{i,T}}})
\end{equation}
Then we combine the above inequality with the upper bound of $\Phi_{T+1}$
\begin{equation}
    -\eta\min_i{l_{i,T}}\le-\eta\sum_{t=1}^{T}l\left({\hat{y}}_t,y_{i,t}\ \right)+\frac{{TB}^2\eta^2}{8}+logk
\end{equation}
As a result:
\begin{equation}
    \sum_{t=1}^{T}l\left({\hat{y}}_t,y_{i,t}\right)-\min_i{l_{i,T}}\le\frac{{TB}^2\eta}{8}+\frac{logk}{\eta}\le\sqrt{\frac{1}{2}TB^2logk}
\end{equation}
Then this theorem can be proved.
\end{proof}
\end{document}